\title{Minimax Optimal Algorithms for\\Fixed-Budget Best Arm Identification}
\author{%
  Junpei Komiyama\\%\thanks{Use footnote for providing further information
  New York University\\
  New York, NY, United States \\
  \texttt{junpei@komiyama.info} \\
  % examples of more authors
  \And
  Taira Tsuchiya \\
  Kyoto University, Kyoto, Japan \\
  RIKEN AIP, Tokyo, Japan \\
  \texttt{tsuchiya@sys.i.kyoto-u.ac.jp} \\
  \AND
  Junya Honda \\
  Kyoto University, Kyoto, Japan \\
  RIKEN AIP, Tokyo, Japan \\
  \texttt{honda@i.kyoto-u.ac.jp} \\
  % \And
  % Coauthor \\
  % Affiliation \\
  % Address \\
  % \texttt{email} \\
  % \And
  % Coauthor \\
  % Affiliation \\
  % Address \\
  % \texttt{email} \\
}

\usepackage{mathtools}
\usepackage{amsmath}
\usepackage{amsthm}
\usepackage{amssymb}
\usepackage{setspace}
\usepackage{booktabs}
\usepackage{tabularx}

\mathtoolsset{showonlyrefs}  

\makeatletter

\usepackage{amsthm}
\usepackage{amsfonts}
\usepackage{bbm}
\usepackage{enumerate}
\usepackage{float}
\usepackage{caption}

\usepackage{subfigure}
\newlength{\subfigwidth}
\newlength{\subfigcolsep}
\setlength{\subfigcolsep}{0.5\tabcolsep}

\theoremstyle{remark}

\theoremstyle{definition}
\newtheorem{thm}{Theorem}
\newtheorem{lem}[thm]{Lemma}

\newtheorem{prop}[thm]{Proposition}
\newtheorem{cor}[thm]{Corollary}

\newtheorem{remark}{Remark}

\usepackage{amsmath} 
\usepackage{amssymb}
\usepackage{bm}
\usepackage{ascmac}
\usepackage{setspace}
\usepackage[at]{easylist}
\usepackage{comment}

\usepackage{url}

\usepackage{breakurl}

\usepackage[ruled,vlined,linesnumbered]{algorithm2e}%,algpseudocode}

\usepackage{amsmath}
\DeclareMathOperator*{\argmax}{arg\,max}

\newcommand{\Real}{\mathbb{R}}
\newcommand{\Natural}{\mathbb{N}}

 %Reg(n) - Reg(n-1)

\newcommand{\Ex}{\mathbb{E}}
\newcommand{\Prob}{\mathbb{P}}
\newcommand{\eps}{\epsilon}

\newcommand{\Ind}{\bm{1}}

\newcommand{\nn}{\nonumber\\}

\newcommand{\ist}{i^*}
\newcommand{\Ist}{\mathcal{I}^*}

\newcommand{\bmP}{\bm{P}}
\newcommand{\calH}{\mathcal{H}}
\newcommand{\calT}{\mathcal{T}}
\newcommand{\n}{\nonumber}
\newcommand{\rd}{\mathrm{d}}
\newcommand{\calR}{\mathcal{R}}
\newcommand{\calP}{\mathcal{P}}
\newcommand{\calQ}{\mathcal{Q}}
\newcommand{\calI}{\mathcal{I}}
\newcommand{\bmQ}{\bm{Q}}
\newcommand{\bmr}{\bm{r}}
\newcommand{\typi}[1]{J(#1;\pi,\ep)}
\newcommand{\typr}[1]{\bm{r}^B(#1;\pi,\ep)}
\newcommand{\typrs}[1]{\bm{r}(#1;\pi,\ep)}
\newcommand{\typri}[2]{r_{#1}(#2;\pi,\ep)}
\newcommand{\since}[1]{\quad\left(\mbox{#1}\right)}
\newcommand{\E}{\Ex}
\newcommand{\idx}[1]{\Ind\!\left[#1\right]}

\newcommand{\EA}{\mathcal{A}}
\newcommand{\EB}{\mathcal{B}}

\newcommand{\EP}{\mathcal{P}}
\newcommand{\EQ}{\mathcal{Q}}

\newcommand{\ES}{\mathcal{S}}

\newcommand{\EV}{\mathcal{V}}

\newcommand{\br}{\bm{r}}

\newcommand{\brconf}{\bm{r}^{\mathrm{conf}}}
\newcommand{\rconf}{r^{\mathrm{conf}}}

\newcommand{\btheta}{\bm{\theta}}

\newcommand{\bP}{\bm{P}}

\newcommand{\bQ}{\bm{Q}}

\newcommand{\hatbQ}{\bQ}
\newcommand{\hatQ}{Q}
\newcommand{\bQb}[1]{\bm{Q}_{#1}}
\newcommand{\bQbp}[1]{\bm{Q}_{#1}'}

\newcommand{\Qb}[1]{Q_{#1}}

\newcommand{\brb}[1]{\bm{r}_{#1}}
\newcommand{\rb}[1]{r^{(#1)}}
\newcommand{\bVb}[1]{\bm{V}_{#1}}
\newcommand{\Vb}[1]{V_{#1}}
\newcommand{\rbsl}[1]{r_{#1}^*}
\newcommand{\brbsl}[1]{\br_{#1}^*}

\newcommand{\brB}{\bm{r}^B}
\newcommand{\brBs}{\bm{r}^{B,*}}

\newcommand{\Jsl}{J^*}

\newcommand{\Cconf}{C^{\mathrm{conf}}}

\newcommand{\Cfull}{R^{\mathrm{go}}}
\newcommand{\Cpart}{R^{\mathrm{go}}}
\newcommand{\Cpartinf}{R^{\mathrm{go}}_{\infty}}

\newcommand{\secref}[1]{Section~\ref{#1}}
\newcommand{\appref}[1]{Appendix~\ref{#1}}
\newcommand{\algoref}[1]{Algorithm~\ref{#1}}
\newcommand{\thmref}[1]{Theorem~\ref{#1}}
\newcommand{\lemref}[1]{Lemma~\ref{#1}}
\newcommand{\corref}[1]{Corollary~\ref{#1}}
\newcommand{\PoE}{\Prob[J(T) \notin \Ist(\bP)]}
\newcommand{\EQtype}{\EV}
\newcommand{\Pemp}{\mathcal{\bP}^{\mathrm{emp}}}
\newcommand{\Qemp}{\mathcal{\bQ}^{\mathrm{emp}}}

\newcommand{\simplex}[1]{\Delta_{#1}}
\newcommand{\Realp}{\Real^{+}}

\newcommand{\Texp}{T} %exponential family

\newcommand{\KL}[2]{D\left({#1} \middle\| {#2}\right)}  % KL(a||b)
\newcommand{\sumK}{\sum_{i=1}^K}

\newcommand{\ep}{\epsilon}

\newcommand{\bPmin}{\bP^{\min}}
\newcommand{\bQmin}{\bQ^{\min}}

\newcommand{\Ntrue}{{N^{\mathrm{true}}}}
\newcommand{\Nemp}{{N^{\mathrm{emp}}}}

% https://tex.stackexchange.com/questions/60545/should-i-mathrm-the-d-in-my-integrals
\newcommand*\diff{\mathop{}\!\mathrm{d}}

\allowdisplaybreaks

\makeatother

\begin{document}

\maketitle

\begin{abstract}
We consider the fixed-budget best arm identification problem where the goal is to find the arm of the largest mean with a fixed number of samples. It is known that the probability of misidentifying the best arm is exponentially small to the number of rounds. However, limited characterizations have been discussed on the rate (exponent) of this value. In this paper, we characterize the minimax optimal rate as a result of an optimization over all possible parameters.
We introduce two rates, $R^{\mathrm{go}}$ and $R^{\mathrm{go}}_{\infty}$, corresponding to lower bounds on the probability of misidentification, each of which is associated with a proposed algorithm.
The rate $R^{\mathrm{go}}$ is associated with $R^{\mathrm{go}}$-tracking, which can be efficiently implemented by a neural network and is shown to outperform existing algorithms.
However, this rate requires a nontrivial condition to be achievable.
To address this issue, we introduce the second rate $R^{\mathrm{go}}_\infty$.
We show that this rate is indeed achievable by introducing a conceptual algorithm called delayed optimal tracking (DOT).
\end{abstract}

\section{Introduction\label{sec_intro}}

We consider $K$-armed best arm identification problem with $T$ samples. In this problem, each arm $i \in [K] = \{1,2,\dots,K\}$ is associated with 
(unknown) distribution $P_i \in \EP$
for some class of distributions $\EP$.
Upon choosing arm $i$, the forecaster observes reward $X(t)$, which is drawn independently from
$P_i$.
The forecaster then tries to identify (one of) the best arm\footnote{We use $\Ist = \Ist(\bP) \subset [K]$ as the set of best arms and $\ist=\ist(\bP) \in \Ist(\bP)$ as one of them (ties are broken in an arbitrary way). These differences do not matter much in this paper.} 
$\Ist = \argmax_i \mu_i$ with the largest mean $\mu^* = \max_i \mu_i$ for $\mu_i=\Ex_{X\sim P_i}[X]$.
The problem\footnote{See \secref{sec_relwork} regarding the related work on BAI and R\&S.} is called the best arm identification (BAI, \cite{Audibert10}), or the ranking and selection (R\&S, \cite{Hong2021review}).

To this aim, the forecaster uses some algorithm that would adaptively choose an arm based on its history of rewards. 
At each round $t$, the algorithm chooses one of the arms $I(t) \in [K]$ and receives the corresponding reward $X(t)$. After the $T$-th round, the algorithm outputs a recommendation arm $J(T) \in [K]$, which corresponds to an estimator of the best arm.
The probability of misidentification is expressed by $\Prob[J(T) \notin \Ist]$, which will be referred to as the probability of the error (PoE) throughout the paper.
Best arm identification has two settings. In the fixed confidence (FC) setting, the forecaster minimizes the number of draws $T$ until the confidence level on the PoE reaches a given value $\delta \in (0,1)$. In this case, $T$ is a 
stopping time
that can be chosen adaptively. In the fixed-budget (FB) setting, the forecaster tries to minimize the PoE given a constant $T$. In this paper, we shall focus on the FB setting.
In general, a good algorithm for the FC setting is very different from that for the FB setting. To be more specific,
an algorithm for the FC setting can be uniformly optimal.\footnote{A more complete discussion on this topic can be found in \secref{sec_instopt}.} Namely, several FC algorithms exist \citep{Garivier2016} that is able to adapt to each
instance of distributions $\bP=(P_1,P_2,\dots,P_K)$.
%as far as we consider algorithms called $\delta$-PAC.
On the contrary, an algorithm for the FB setting requires consideration of the tradeoff in that improving the PoE for an instance $\bP$ can compromise the PoE for another instance $\bP'$.
Thus, we must consider an optimization of the performance over all possible $\bP\in\EP^K$.
See Appendix \ref{sec_suboptfc} for a demonstration of the asymptotic inconsistency between the two settings.

\subsection{Minimax optimality in the fixed-budget setting}

In the FB setting, the PoE decays exponentially to $T$ as $\exp(-R T)$ for some \textit{rate} $R>0$. The uniform optimality given above is no longer available here. 
To demonstrate this,
assume that we make an estimate of $\bP$ based on the initial $o(T)$ rounds, say, $\sqrt{T}$ rounds. In this case, we can obtain the estimate of $\bP$ that is $\epsilon$-correct with probability $\exp(-\eps^2 O(\sqrt{T})) = \exp(-o(T))$. However, this estimation does not help to improve the rate of exponential convergence.
In other words, estimating $\bP$ requires non-negligible (i.e. $O(T)$) cost for exploration. 
As a result, we cannot fully adapt the PoE to each instance $\bP$ unlike the FC setting. Instead, to discuss optimality in the FB setting, we must choose a complexity function $H(\bP)$, and the performance of an algorithm must be evaluated on the rate normalized by the complexity $\exp(-RT/H(\bP))$. 

In the literature, little is known about the optimal rate of the exponent. 
Several papers considered what rate $R$ is achievable for all $\bP$ given a complexity function $H(\bP)$.
\cite{kaufman16a} showed that this rate is larger than the corresponding rate of the FC setting.\footnote{That is, given the complexity of the FC setting, the rate for some instances is smaller than $1$ in the FB setting.}
\cite{Audibert10} proposed the successive rejects (SR) algorithm, which has the rate of $1 / (\log K)$ with the complexity function $H_2(\bP):=\max_{i\in[K]} \frac{i}{\Delta_i^2}
$ for $\Delta_i = \max_j \mu_j - \mu_i$
satisfying $\Delta_1\le \Delta_2\le\dots\le \Delta_K$.
\cite{Carpentier2016} showed a particular set of instances such that this rate matches the lower bound up to a constant factor. 
However, the constant used there is by far loose\footnote{Theorem 1 therein includes a large constant $400$.}, and there is limited discussion on the actual rate of such algorithms.

\subsection{Contributions\label{sec_contributions}}
This paper tightly characterizes the optimal minimax rate of the PoE as a result of an optimization given $\bP$. 
Let $H = H(\bP): \EP^K \rightarrow \Realp \cup \{\infty\}$ be any complexity measure.
We then discuss the best possible rate $R>0$ such that the PoE is bounded by $\exp(-RT/H(\bP)+o(T))$ for all $\bP\in\EP^K$ and make the following contributions.

\begin{easylist}[itemize]
@ We derive an upper bound on $R$ (corresponding to a lower bound of the PoE), denoted by $R^{\mathrm{go}}$, which we obtain by considering a class of oracle algorithms that can determine the allocation of trials to each arm knowing the final empirical distribution after $T$ rounds (\thmref{thm_trackable}).

@ We propose an algorithm ($R^{\mathrm{go}}$-tracking) that tracks this oracle allocation based on the current empirical distribution (Section~\ref{lower_oracle}).
Although this oracle allocation is expressed by a complicated minimax optimization, we propose a technique to learn this by a neural network and empirically confirm that the PoE of the learned algorithm is close to the lower bound (Sections~\ref{sec_nn} and \ref{sec_sim}).
We also discuss that the algorithm is unlikely to provably achieve the bound even when the minimax problem is perfectly solved because of the impossibility of tracking.

@ We tighten the PoE lower bound by weakening the oracle algorithms to obtain a new rate $\Cpartinf$.
We propose the delayed optimal tracking (DOT) algorithm that asymptotically achieves this rate for Bernoulli and Gaussian arms.
While DOT is minimax, the algorithm is computationally almost infeasible (Sections~\ref{sec_lower_trackable} and \ref{sec_match_alg}). 

\end{easylist}

In summary, we propose a nearly tight minimax lower bound rate on the PoE with a computationally feasible algorithm that is empirically close to this bound. 
We also propose exact minimax rate and a matching algorithm in a computationally infeasible form.
Notation is listed in the Section \ref{sec_notation} in the appendix.

\subsection{Related work \label{sec_relwork}}

Compared with the works of the fixed-confidence (FC) BAI, less is known about the fixed-budget (FB) BAI. For example, a book on this subject \citep{lattimore_book} spends only two pages on the FB-BAI.\footnote{Section 33.3 therein.} Many algorithms designed for the FC-BAI, such as D-tracking \citep{kaufman16a}, do not have a finite-time PoE guarantee when we apply them to the FB setting. 
Nevertheless, there are two well-known FB BAI algorithms: Successive rejects (SR, \cite{Audibert10}) and sequential halving (SH, \cite{KarninKS13,Shahrampour2017}). Both SR and SH decompose $T$ time steps to a finite number of time segments, and then progressively narrow the candidate of the best arm at the end of each segment. While SR discards one arm after each segment, SH discards half of the remaining arms after each segment. SR and SH have the guarantee on the PoE of the rate $\exp\left(-RT/H_2(\bP)\right)$ for some constant $R>0$. 
Other FB BAI algorithms, such as UCB-E \citep{Audibert10} and UGapE \citep{Gabillon2012}, require the knowledge of minimum gap $\min_i \Delta_i$, and thus are not universal to all best arm identification instances.

Another literature on this topic is the ranking and selection (R\&S) problems \citep{bechhofer1954,guptaconv1998,powelloptsnd,Hong2021review}. Although the goal of R\&S problems is to identify the best arm, many R\&S papers do not consider the estimation error of $\bP$ in a finite time. As a result, algorithms therein do not have the guarantee on the PoE in the best arm identification setting.
The optimal computing budget allocation (OCBA, \cite{chen2000,glynn2004large}) algorithm tries to minimize the 
PoE
assuming the plug-in estimator matches the true parameter. 
Bayesian R\&S algorithms try to solve the dynamic programming of minimizing the
PoE
marginalized by a prior (Bayesian PoE), which is computationally prohibitive, and thus approximated solutions have been sought \citep{frazier2008,powelloptsnd}.

\section{Minimax optimal algorithm\label{sec_theory}}

In this section, we derive several lower bounds on the PoE and propose algorithms to empirically or theoretically achieve these bounds.

First, we formalize the problem.
Let $\EP$ be a known class of reward distributions. We consider the case where $\EP$ is the set of Bernoulli distributions with mean $\Theta\subset [0,1]$ (including the case $\Theta=[0,1]$), or Gaussian distributions with mean in $\Theta\subset \mathbb{R}$ (including the case $\Theta=\mathbb{R}$) and known variance $\sigma^2>0$.
It should be noted that many parts of the results in this paper can be generalized to much wider classes of distributions, but it makes the notation much longer and is discussed in \appref{append_extension}.

When we derive lower bounds and construct algorithms, we introduce $\EQ$ as a class of distributions corresponding to the estimated distributions of the arms.
Namely, we set $\EQ$ as the set of all Bernoulli (resp.~Gaussian) distributions with mean in $[0,1]$ (resp.~$\mathbb{R}$) when $\EP$ is the set of Bernoulli (resp.~Gaussian) distributions with mean in $\Theta$.
As such, we take $\EQ\supset \EP$ so that the estimator of $P_i$ is always in $\EQ$.
In these models, we identify the distribution $P_i \in \EP$ with its mean parameter in $\Theta\subset \mathbb{R}$.

Our interest lies in the rate $\lim_{T\to\infty}\frac{1}{T}\log(1/\PoE)$ of convergence of the PoE.
Since we are interested in lower and upper bounds of the rate of algorithms including those requiring the knowledge of $T$,
we define the rate for a sequence\footnote{Here, we use $\{\pi_T\}$ to denote a sequence of algorithms for each $T=1,2,\dots$. For example, successive rejects \citep{Audibert10} is a sequence of algorithms in this sense.} of algorithms $\{\pi_T\}$ by
\begin{align}
R(\{\pi_T\})=
\inf_{\bP\in \EP^K}H(\bP)\liminf_{T\to\infty}\frac{1}{T}\log(1/\PoE).
\label{rate_interest}
\end{align}
Here,
a larger $R(\{\pi_T\})$ indicates a faster convergence of the PoE.\footnote{This corresponds to the rate $R = R(\{\pi_T\})$ of $\exp(-RT/H(\bP))$.}

\subsection{PoE for oracle algorithms}\label{lower_oracle}
First, we derive a lower bound on the PoE that is unlikely to be achievable but strongly related to an optimal algorithm.
Let $D(P\Vert Q)=\Ex_{X\sim P}[\frac{\diff P}{\diff Q}(X)]$ be the Kullback–Leibler (KL) divergence between $P$ and $Q$. 
In the case of Bernoulli distributions, $D(P\Vert Q) = \mu_P\log(\mu_P/\mu_Q) + (1-\mu_P)\log((1-\mu_P)/(1-\mu_Q))$ where $\mu_P, \mu_Q$ are the means of $P,Q$. In the case of Gaussian distributions, $D(P\Vert Q) = (\mu_P-\mu_Q)^2/(2\sigma^2)$.
We then have the following bound.
\begin{thm}\label{thm_trackable}
Under any sequence of algorithm $\{\pi_T\}$ it holds that 
\begin{align}
\!\!\!\!\!\!\!\!\!\!
R(\{\pi_T\}) \le \sup_{\br(\cdot) \in \simplex{K},\,J(\cdot)\in[K]}\,
\inf_{\bQ \in \EQ^K}\, \inf_{\bP\in\EP^K: J(\bQ) \notin \Ist(\bP)} H(\bP)
\sum_{i\in[K]} r_i(\bQ) D(Q_i\Vert P_i)=:\Cfull,
\label{def_rgo}
\end{align}
where the outer supremum is taken over all functions
$\br(\cdot):  \EQ^K\to \simplex{K},\,J(\cdot):\EQ^K\to [K]$.
\end{thm}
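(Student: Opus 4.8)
The plan is to derive this as an information-theoretic lower bound on the PoE via a change of measure, after distilling the whole sequence $\{\pi_T\}$ into a single admissible pair of oracle functions. Because $\Cfull$ is a supremum over all $\br(\cdot)$ and $J(\cdot)$, it suffices to construct from $\{\pi_T\}$ one pair $(\br_0,J_0)$ with $\br_0(\cdot):\EQ^K\to\simplex{K}$ and $J_0(\cdot):\EQ^K\to[K]$ for which $R(\{\pi_T\})\le\inf_{\bQ\in\EQ^K}\inf_{\bP\in\EP^K:\,J_0(\bQ)\notin\Ist(\bP)}H(\bP)\sum_{i}r_{0,i}(\bQ)D(Q_i\Vert P_i)$, since the right-hand side is one term of the supremum and hence at most $\Cfull$. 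I will in fact prove the pointwise inequality that for every $\bQ\in\EQ^K$ and every $\bP\in\EP^K$ with $J_0(\bQ)\notin\Ist(\bP)$, $\liminf_{T\to\infty}\frac1T\log(1/\Prob_\bP[J(T)\notin\Ist(\bP)])\le\sum_i r_{0,i}(\bQ)D(Q_i\Vert P_i)$. Multiplying by $H(\bP)$ and using that $R(\{\pi_T\})$, being an infimum over $\bP$, is bounded by each individual term then gives the displayed bound directly from \eqref{rate_interest}.

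To build the pair I would run $\pi_T$ on i.i.d.\ data drawn from $\bQ$ and read off its limiting behaviour. I may assume $\{\pi_T\}$ is consistent, meaning $\Prob_\bP[J(T)\notin\Ist(\bP)]\to0$ for all $\bP\in\EP^K$ with $H(\bP)<\infty$; otherwise some such $\bP$ has $\liminf_T\frac1T\log(1/\PoE)=0$, whence $R(\{\pi_T\})=0\le\Cfull$ and there is nothing to prove. For $\bQ\in\EP^K$ with a unique best arm, consistency forces the recommendation to concentrate, $\Prob_\bQ[J(T)=\ist(\bQ)]\to1$, and I set $J_0(\bQ):=\ist(\bQ)$. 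For the allocation I use compactness of $\simplex{K}$ to fix a subsequence $S_\bQ\subseteq\Natural$, depending only on $\bQ$, along which the expected pull proportions converge, $\Ex_\bQ[N_i(T)]/T\to r_{0,i}(\bQ)$ for $T\in S_\bQ$, where $N_i(T)$ is the number of pulls of arm $i$ up to round $T$; this defines $\br_0(\bQ)\in\simplex{K}$.

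The core estimate I would carry out combines the divergence decomposition with the data-processing inequality. For each $T$, the relative entropy between the laws of the full history under $\bQ$ and under $\bP$ equals $\sum_i\Ex_\bQ[N_i(T)]D(Q_i\Vert P_i)$, and contracting along the map that sends the history to the indicator $\Ind[J(T)=\ist(\bQ)]$ gives $d\big(\Prob_\bQ[J(T)=\ist(\bQ)],\Prob_\bP[J(T)=\ist(\bQ)]\big)\le\sum_i\Ex_\bQ[N_i(T)]D(Q_i\Vert P_i)$, where $d$ is the binary relative entropy. Since $\ist(\bQ)=J_0(\bQ)\notin\Ist(\bP)$, the second argument is at most $\Prob_\bP[J(T)\notin\Ist(\bP)]$, and the elementary inequality $d(a,b)\ge a\log(1/b)-\log2$ yields $\Prob_\bQ[J(T)=\ist(\bQ)]\log(1/\Prob_\bP[J(T)\notin\Ist(\bP)])\le\sum_i\Ex_\bQ[N_i(T)]D(Q_i\Vert P_i)+\log2$. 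Dividing by $T$, taking the limit along $T\in S_\bQ$, and inserting $\Prob_\bQ[J(T)=\ist(\bQ)]\to1$ and $\Ex_\bQ[N_i(T)]/T\to r_{0,i}(\bQ)$ give $\liminf_{T\in S_\bQ}\frac1T\log(1/\Prob_\bP[J(T)\notin\Ist(\bP)])\le\sum_i r_{0,i}(\bQ)D(Q_i\Vert P_i)$; as the liminf over the full sequence is no larger than that over $S_\bQ$, the pointwise inequality of the first paragraph follows.

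The main obstacle is precisely the concentration of the recommendation, $\Prob_\bQ[J(T)=\ist(\bQ)]\to1$: any limit strictly below $1$ would survive as a multiplicative factor $1/\lim\Prob_\bQ[J(T)=\ist(\bQ)]$ (as large as $K$) on the right-hand side and break the exact match with $\Cfull$. The consistency dichotomy removes this factor, but only for $\bQ\in\EP^K$ with a unique maximizer, whereas the infimum in $\Cfull$ runs over the larger set $\EQ^K\supset\EP^K$ of empirically realizable means---and this larger set is what is needed, since the inner infimum over $\EQ^K$ can only be smaller than the one over $\EP^K$. I would therefore finish by a continuity/density argument: approximate a general $\bQ\in\EQ^K$ by instances $\bQ^{(m)}\in\EP^K$ with unique best arms whose means converge to those of $\bQ$, apply the pointwise bound to each $\bQ^{(m)}$, and pass to the limit using continuity of $D(\cdot\Vert\cdot)$, of the allocation, and of the defining constraint $\ist(\cdot)\notin\Ist(\bP)$, with ties broken by an arbitrarily small perturbation. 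Controlling this limit interchange, rather than the change-of-measure step itself, is where the real difficulty lies.
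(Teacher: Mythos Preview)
Your route is genuinely different from the paper's and the gap you flag in the last paragraph is real, not cosmetic. The divergence decomposition plus data processing (Kaufmann--Capp\'e--Garivier) treats $\bQ$ as a \emph{true} bandit instance and $\bP$ as an alternative, so you need $\bQ\in\EP^K$ with a unique best arm in order to invoke consistency and get $\Prob_{\bQ}[J(T)=\ist(\bQ)]\to 1$. The proposed density/continuity patch to reach $\bQ\in\EQ^K$ does not go through in general: (i) your $\br_0(\bQ)$ is a subsequential limit of $\Ex_\bQ[N_i(T)]/T$ along a subsequence that itself depends on $\bQ$, so there is no reason whatsoever for $\br_0$ to be continuous in $\bQ$---the paper's remark after the theorem statement explicitly singles out discontinuity of the allocation in $\bQ$ as the obstruction that defeats a naive analysis; (ii) when $\Theta\subsetneq[0,1]$ (resp.~$\mathbb{R}$) is not dense, $\EP^K$ is not dense in $\EQ^K$ and there is nothing to approximate by. Even in the benign case $\EP=\EQ$ you would still need $\inf_{\bQ\in\EQ^K}(\cdots)$ to equal the infimum over the unique-best-arm subset, which again hinges on continuity of $\br_0$ you have not established.

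The paper avoids both issues with a typical-set argument that never requires $\bQ$ to be an admissible bandit instance or the allocation to vary continuously. For each $\bQ\in\EQ^K$ it defines the event $\calT_\epsilon(\bQ)$ that the realized log-likelihood ratios are $\epsilon T$-close to $\sum_i n_i D(Q_i\Vert P_i)$, and then takes the \emph{modal} allocation $\br(\bQ;\pi,\epsilon)=\argmax_{\br\in\calR_T}\bQ[\br(\calH)=\br\mid\calT_\epsilon(\bQ)]$ and the modal decision $J(\bQ;\pi,\epsilon)$. This yields, via change of measure on $\calT_\epsilon(\bQ)$, the pointwise bound $\frac{1}{T}\log\bP[J(T)\notin\Ist(\bP)]\ge -\sum_i r_i(\bQ;\pi,\epsilon)D(Q_i\Vert P_i)-\epsilon-o(1)$ with a polynomial prefactor $1/(K\,|\calR_T|)$, $|\calR_T|\le (T+1)^K$, which disappears after dividing by $T$. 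The factor $1/K$ replaces your consistency argument (so no need for $\Prob_\bQ[J(T)=\ist(\bQ)]\to 1$), and because $\bQ$ is used only as a sampling distribution for the law of large numbers and the change of measure, the construction works for every $\bQ\in\EQ^K$ directly---no density step is needed. The subsequential limit to define $(\br(\cdot;\{\pi_T\},\epsilon),J(\cdot;\{\pi_T\},\epsilon))$ is then taken pointwise in $\bQ$, exactly as you do, but now over all of $\EQ^K$, which lets the final $\inf_{\bQ\in\EQ^K}$ match the definition of $\Cfull$.
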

All proofs are provided in the appendix.
This theorem
states that
under any algorithm there exists an instance $\bP$ such that the PoE is at least $\exp(-T \Cfull/H(\bP)+o(T))$.
Intuitively speaking, the bound in Theorem~\ref{thm_trackable} corresponds to the best possible rate of oracle algorithms that can determine the allocation as $\br = \br^*(\bQ)\in\simplex{K}$ knowing the final empirical mean $\bQ=\bQ(T)$, where $\br^*(\cdot)$ is the ($\epsilon$-)optimal\footnote{This paper uses $\epsilon>0$ as an arbitrarily small gap to the optimal solution. This $\epsilon$ is introduced so that we can avoid the discussion on the existence of a supremum and does not matter much in this paper. An asterisk is used to denote optimality.} solution of Eq.\,\eqref{def_rgo}.

However, whether the rate $\Cfull$ or not by some algorithm is highly nontrivial.\footnote{We leave this an open problem.}
In the actual trial, the algorithm can only know the empirical mean $\bQ(t-1)$ at the beginning of the current round $t$; we cannot ensure the achievability of the bound for oracle algorithms.
Despite this, one reasonable choice of the algorithm would be to keep tracking this optimal allocation $\br^*(\bQ(t-1))$, expecting that the current empirical mean $\bQ(t-1)$ is close to $\bQ(T)$.
$\Cfull$-tracking in Algorithm~\ref{alg:R_track} is the algorithm based on this idea.
Here, $N_i(t-1)$ is the number of times that the arm $i$ is drawn at the beginning of the $t$-th round, and it draws the arm such that the current fraction of the allocation $N_i(t-1)/(t-1)$ is the most insufficient compared with the estimated optimal allocation $\br^*(\bQ(t-1))$.

\begin{algorithm}[t]
\SetNoFillComment
\DontPrintSemicolon
\SetKwInOut{Input}{input}
\Input{($\epsilon$-)optimal solution $(\br^*(\cdot),J^*(\cdot))$ of \eqref{def_rgo}.}
Draw each arm once.\;
\For{$t = K+1, 2, \ldots, T$}{
    Draw arm $\argmax_{i\in[K]} \left\{ r_{i}^*(\hatbQ(t-1)) - N_i(t-1)/(t-1) \right\}$.\\
}
\Return $J(T) = J^*(\bQ)$.
\caption{$\Cfull$-Tracking} 
\label{alg:R_track}
\end{algorithm}

As we will see in \secref{sec_sim}, the empirical performance of Algorithm~\ref{alg:R_track} is very close to the PoE lower bound stated above.
However, it is difficult to expect that this algorithm provably achieves this bound in general because of the following:
We could prove that $\Cfull$-tracking is optimal if
the fraction of allocation always satisfies $N_i(t)/t = \br^*(\bQ(t))+o(1)$, that is, the algorithm is able to track the ideal allocation $\br^*(\bQ(t))$.
However, this does not hold in general.
For example, the empirical mean $\bQ(t)$ sometimes changes rapidly in the Gaussian case.
Whilst such an event occurs with an exponentially small probability, the PoE itself is also an exponentially small probability and it is highly nontrivial to specify in which case the tracking failure probability becomes negligible.

\begin{remark}{\rm (Derivation of Theorem \ref{thm_trackable})}
From a technical point of view, the main difference from the lower bound in the FC setting is that we also have to consider candidates for empirical distributions $\bQ$ and true distributions $\bP$.
This makes the analysis much more difficult because a slight difference of the empirical distribution might (possibly discontinuously) affect the allocation unlike the difference of the true distribution $\bP$ unknown to the algorithm.
A naive analysis only depending on the empirical distribution fails because of this discontinuity of the allocation.
To overcome this difficulty, we adopt a technique inspired by the {\it typical set analysis} that is often used in the field of information theory \citep{cover}. We define the {\it typical allocation} for each candidate of empirical distribution $\bQ$ and prove the theorem by evaluating the error probability based on the typical allocation.
\end{remark}

\begin{remark}{\rm (Trivial solutions)}
We can take arbitrary $H(\bP)>0$ as a complexity measure, but $\Cfull$ can become zero if $H(\bP)$ is not taken reasonably.
For example, if we take $H(\bP) = \sum_i 1/\Delta_i$ (rather than $\sum_i 1/\Delta_i^2$ that is widely used), any positive rate is not achievable around a small gap $\min_i \Delta_i$, and thus $\Cfull=0$.
When $\Cfull=0$ any algorithm trivially satisfies $\mathrm{PoE} \le \exp(-T \Cfull/H(\bP)+o(T))$. This means that any algorithm is minimax optimal in terms of $H(\bP)$, that is, such a choice of $H(\bP)$ gives meaningless results.
\end{remark}

\begin{remark}{\rm (Empirical best arm)}\label{rem_J}
Eq.~\eqref{def_rgo} also involves the optimization of the recommendation arm
$J(\bQ)$ and $\br(\bQ)$.
We can easily see that
it is optimal to set $J(\bQ)=\ist(\bQ)$, that is, to take the empirical best arm as the recommendation arm when $\EP=\EQ$ since $R(\pi)$ otherwise becomes zero.
However, $J(\bQ)=\ist(\bQ)$ might not hold for $\bQ \notin \EP$ when $\EP\subsetneq \EQ$.
\end{remark}

Sections \ref{sec_lower_trackable}--\ref{sec_optimallim} that follow this section are for achieving the minimax optimal rate and are of theoretical interest. 
A reader who is mainly interested in an actual implementation may skip these sections.

\subsection{PoE considering trackability}
\label{sec_lower_trackable}
To construct a provably optimal algorithm, we begin by refining the lower bound of the PoE by weakening the ``strength'' of the oracle algorithm.

We consider splitting $T$ rounds into $B$ batches of size $\lfloor T/B\rfloor$ or $\lfloor T/B\rfloor+1$.
Let 
\[
\brB = \left( 
\brb{1}(\bQb{1}),\,\brb{2}(\bQb{1},\bQb{2}),\,\brb{3}(\bQb{1},\bQb{2},\bQb{3}),\dots,\brb{B}(\bQb{1},\dots,\bQb{B})
\right)
\]
be a sequence of $B$ functions, where $\brb{b}: \EQ^{Kb}\to \simplex{K}$ corresponds the allocation in the $b$-th batch
when the empirical means of the first $b$ batches are $\bQ^b=(\bQb{1},\bQb{2},\dots,\bQb{b})$. 
Based on this class of allocation rule, we have the following PoE lower bound.

\begin{thm}{\rm (PoE Bound for batch-oracle algorithms)}\label{thm_lower}
Under any sequence of algorithms $\pi_T$ and $B\in\mathbb{N}$,
\begin{align}
R(\{\pi_T\})
\le \sup_{\brB(\cdot),J(\cdot)}
\,\inf_{\bQ^B\in\EQ^{KB}}\,
\inf_{\bP: J(\bQ^B) \notin \Ist(\bP)}
\frac{H(\bP)}{B} \sum_{i\in[K], b\in[B]} r_{b,i} D(Q_{b,i}\Vert P_i)=:\Cpart_B.\label{def_rgob}
\end{align}
Here, the outer supremum is taken over all functions
$\brB (\cdot)= \left( 
\brb{1}(\cdot),\,\brb{2}(\cdot),\dots,\brb{B}(\cdot)
\right)$ for
$\brb{b}(\cdot):  \EQ^{Kb}\to \simplex{K}$ and $J(\cdot):\EQ^{KB}\to [K]$.
\end{thm}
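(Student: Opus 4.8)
The plan is to mirror the typical-set change-of-measure argument behind \thmref{thm_trackable}, now carried out at the granularity of the $B$ batches. Fix a sequence of algorithms $\{\pi_T\}$ and a batch number $B$. From the algorithm I would first extract an \emph{induced batch allocation}: for each candidate of batch-wise empirical means $\bQ^b=(\bQb{1},\dots,\bQb{b})$, let $r_{b,i}(\bQ^b)$ be the typical (normalized expected) number of pulls of arm $i$ during batch $b$ on the sub-event that the empirical means of batches $1,\dots,b$ are within $\epsilon$ of $\bQ^b$; likewise let $J(\bQ^B)$ be the typical recommendation on the event that the full batch trajectory is close to $\bQ^B$. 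This produces a feasible point $(\brB,J)$ for the outer supremum in \eqref{def_rgob}, so it suffices to show that for this particular induced pair the algorithm's worst-case rate is bounded by the corresponding inf--inf quantity.

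The core estimate is a batch-wise large-deviations bound. Partition $\EQ^{KB}$ into $\epsilon$-cells indexed by discretized trajectories $\bQ^B$, as in the typical-set analysis of \thmref{thm_trackable}, and let $\EE(\bQ^B)$ denote the event that the batch-$b$ empirical mean lies within $\epsilon$ of $\bQb{b}$ for every $b\in[B]$. For a fixed target $\bQ^B$ and the associated induced allocation, I would bound $\Prob_{\bP}[\EE(\bQ^B)]$ from below under the true instance $\bP$. The new ingredient relative to \thmref{thm_trackable} is the chain-rule decomposition across batches: conditioning successively on the first $b-1$ batches, the batch-$b$ contribution is governed by the fixed allocation $\brb{b}(\bQ^b)$ that the induced rule associates with the target trajectory, and each conditional factor contributes $\exp(-(T/B)\sum_{i\in[K]} r_{b,i}D(Q_{b,i}\Vert P_i)+o(T))$. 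Multiplying the $B$ factors yields
\begin{align}
\Prob_{\bP}\!\left[\EE(\bQ^B)\right]\ \ge\ \exp\!\left(-\frac{T}{B}\sum_{i\in[K],\,b\in[B]} r_{b,i}\,D(Q_{b,i}\Vert P_i)+o(T)\right).
\end{align}

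To finish, I would use that on $\EE(\bQ^B)$ the recommendation equals $J(\bQ^B)$ up to the $\epsilon$-discretization; hence whenever $\bP$ is chosen so that $J(\bQ^B)\notin\Ist(\bP)$, the whole event $\EE(\bQ^B)$ is contained in the error event, and the displayed bound lower-bounds the PoE. Taking $\liminf_{T\to\infty}\frac{1}{T}\log(1/\PoE)$, multiplying by $H(\bP)$, and minimizing over the confusing trajectory $\bQ^B\in\EQ^{KB}$ and over the worst instance $\bP$ with $J(\bQ^B)\notin\Ist(\bP)$ recovers the inf--inf quantity of \eqref{def_rgob} for the induced $(\brB,J)$; bounding this by the outer supremum gives $R(\{\pi_T\})\le \Cpart_B$, and letting $\epsilon\downarrow 0$ removes the discretization slack.

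The main obstacle is the same discontinuity flagged in the remark after \thmref{thm_trackable}: the induced batch allocation $\brb{b}(\cdot)$ can jump under an infinitesimal change of $\bQ^b$, so the typical allocation must be defined on discretized cells and shown to be essentially constant within each cell up to $o(T)$ in the exponent. Propagating this control through the $B$-fold chain-rule conditioning—so that the accumulated discretization and concentration errors remain $o(T)$ uniformly over the batches—is the delicate part; the remaining per-batch large-deviations estimates are exactly the ones already established for \thmref{thm_trackable}.
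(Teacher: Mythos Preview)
Your outline is in the right spirit and could likely be pushed through, but the paper takes a cleaner route that sidesteps exactly the discretization/discontinuity obstacle you flag as ``the delicate part.'' The paper does \emph{not} partition $\EQ^{KB}$ into $\epsilon$-cells. For each \emph{single} $\bQ^B$ it defines a typical set $\calT_\epsilon(\bQ^B)$ through log-likelihood ratios (not empirical means), and then defines the induced allocation $\bmr_b(\bQ^b;\pi,\epsilon)$ and decision $J(\bQ^B;\pi,\epsilon)$ as the \emph{mode} (argmax-probability value) under the $\bQ^B$-measure, conditioned on $\calT_\epsilon(\bQ^B)$ and on the earlier batches having already hit their own modes. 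Since the set $\calR_{T,B}$ of possible batch allocations has cardinality at most $(T/B+2)^{KB}$, the mode allocation has conditional probability at least $|\calR_{T,B}|^{-1}$ and the mode decision at least $1/K$; these polynomial-in-$T$ factors are $o(T)$ in the exponent. One then intersects with $\{\bmr^B(\calH)=\bmr^B(\bQ^B;\pi,\epsilon),\,J(T)=J(\bQ^B;\pi,\epsilon)\}$, performs the change of measure from $\bQ^B$ to $\bP$ on that event (where the likelihood ratio equals the KL exponent up to $\epsilon T$ by the definition of $\calT_\epsilon(\bQ^B)$), and finally passes to a subsequence in $T$ along which $(\bmr^B(\cdot;\pi_T,\epsilon),J(\cdot;\pi_T,\epsilon))$ converges by compactness of $(\simplex{K})^B\times[K]$.

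This device also fixes two imprecisions in your sketch. First, the batch-$b$ allocation is \emph{not} ``governed by a fixed allocation'' after conditioning on batches $1,\dots,b-1$: within-batch adaptivity makes $\bmr_b(\calH)$ random, so using the expected allocation in the exponent would require a separate Jensen/convexity step you have not supplied. The paper instead conditions directly on $\{\bmr_b(\calH)=\bmr_b(\bQ^b;\pi,\epsilon)\}$ and pays the $|\calR_{T,B}|^{-1}$ factor. Second, on your event $\EE(\bQ^B)$ the recommendation is not a deterministic function of $\bQ^B$ (different histories with the same batch means can yield different $J(T)$), so ``the recommendation equals $J(\bQ^B)$'' needs justification; the paper avoids this by taking the mode $J(\bQ^B;\pi,\epsilon)$ and paying $1/K$. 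In short, the paper discretizes the \emph{allocation} space (finite for each $T$) rather than the \emph{empirical-mean} space, which is why no continuity of $\br(\cdot)$ across cells is ever needed.
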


Theorem~\ref{thm_trackable} is the special case of this theorem with $B=1$.
This bound corresponds to the best bound of oracle algorithms that can determine the allocation of the $b$-th batch knowing the empirical distribution of this batch.
It is tighter than that given in Theorem~\ref{thm_trackable}, as the oracle considered here cannot know the empirical distribution of the later batches $b+1,b+2,\dots,B$.
It follows that we can obtain the following result.

\begin{cor}\label{cor_oneb}
We have $\Cpart_B \le \Cfull$ for any $B\in\mathbb{N}$.
\end{cor}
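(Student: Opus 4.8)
The plan is to prove the inequality by a direct reduction at the level of the optimization problems \eqref{def_rgo} and \eqref{def_rgob}: from any feasible batch-oracle strategy $(\brB(\cdot),J(\cdot))$ in \eqref{def_rgob} I would construct a single-batch strategy $(\br(\cdot),J'(\cdot))$ feasible in \eqref{def_rgo} whose value is at least as large, and then pass to suprema. The conceptual content is that the batch oracle is weaker than the single-batch oracle because it cannot see future batches; this is captured by collapsing the $B$-batch objective onto the ``diagonal'' where all batch empirical means coincide.

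First I would fix an arbitrary batch strategy $(\brB(\cdot),J(\cdot))$ and, for each $\bQ\in\EQ^K$, define
\[
r_i(\bQ) := \frac{1}{B}\sum_{b=1}^B r_{b,i}(\underbrace{\bQ,\dots,\bQ}_{b}), \qquad J'(\bQ) := J(\underbrace{\bQ,\dots,\bQ}_{B}).
\]
Since each $\brb{b}(\cdot)\in\simplex{K}$ and a convex combination of points of the simplex stays in the simplex, $\br(\cdot)$ is a valid allocation into $\simplex{K}$, so $(\br(\cdot),J'(\cdot))$ is feasible in \eqref{def_rgo}.

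Next I would restrict the inner infimum over $\bQ^B\in\EQ^{KB}$ in \eqref{def_rgob} to the diagonal $\{(\bQ,\dots,\bQ):\bQ\in\EQ^K\}$, which can only increase the infimum. On this subset $Q_{b,i}=Q_i$ for all $b$, so the batch objective simplifies to
\[
\frac{H(\bP)}{B}\sum_{i\in[K],\,b\in[B]} r_{b,i}(\bQ,\dots,\bQ)\, D(Q_i\Vert P_i) = H(\bP)\sum_{i\in[K]} r_i(\bQ)\, D(Q_i\Vert P_i),
\]
while the constraint $J(\bQ^B)\notin\Ist(\bP)$ becomes $J'(\bQ)\notin\Ist(\bP)$. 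Thus the diagonal-restricted value of the batch strategy equals exactly the value of the constructed single-batch strategy $(\br(\cdot),J'(\cdot))$ in \eqref{def_rgo}, which is at most $\Cfull$ by definition of the supremum there. Combining the two steps gives, for the fixed batch strategy,
\[
\inf_{\bQ^B}\inf_{\bP:\,J(\bQ^B)\notin\Ist(\bP)} \tfrac{H(\bP)}{B}\textstyle\sum_{i,b} r_{b,i}\, D(Q_{b,i}\Vert P_i)\;\le\;\Cfull,
\]
and taking the supremum over all batch strategies on the left yields $\Cpart_B\le\Cfull$.

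I do not expect a serious obstacle: the statement is essentially a monotonicity/reduction argument, and the only points requiring care are (i) verifying that the averaged allocation $\br(\cdot)$ lands in $\simplex{K}$ and carries the correct $1/B$ normalization so the two objectives match on the diagonal, and (ii) checking that the feasibility constraint on $\bP$ transfers correctly under the substitution $J(\bQ^B)=J'(\bQ)$. Both are routine once the diagonal restriction is set up, so the bulk of the work is simply writing the reduction cleanly.
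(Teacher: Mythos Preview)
Your proposal is correct and follows essentially the same approach as the paper: restrict the inner infimum over $\bQ^B$ to the diagonal $\bQ_1=\cdots=\bQ_B=\bQ$, collapse the batch objective to a single allocation $r_i(\bQ)=\frac{1}{B}\sum_b r_{b,i}(\bQ,\dots,\bQ)$, and bound by $\Cfull$. The only cosmetic difference is that the paper carries the outer $\sup$ along the chain of inequalities rather than first fixing a strategy and taking $\sup$ at the end, but the argument is identical.
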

We will show that $\Cpartinf := \lim_{B\rightarrow \infty} \Cpart_B$ exists and is the best possible rate. 

\begin{algorithm}[!t]
\SetKwInOut{Input}{input}
\Input{$\eps$-optimal solution $\brBs(\cdot)=(\brbsl{1}(\cdot),\brbsl{2}(\cdot),\dots,\brbsl{B}(\cdot),\Jsl(\cdot))$ of Eq.\,\eqref{def_rgob}.}
\For{$b=1,2,\dots,K$}{
    Set $r_{b,i}=\Ind[i=b]$ for $i\in[K]$ and draw arm $b$ for $T_B$ times.
}
Set $\bQbp{1}: = \bQ_K$ for the empirical mean $\bQ_K$.

\For{$b=K+1,K+2,\dots,B+K-1$}{ 
    Compute $\brb{b} = (r_{b,1},r_{b,2},\dots,r_{b,K})= \brbsl{b-K}(\bQbp{1},\bQbp{2},\dots,\bQbp{b-K})$.
    
    Draw each arm $i$ for $n_{b,i}$ times, 
    where $n_{b,i}\ge r_{b,i}(T_B-K)$ is taken so that $\sum_{i\in[K]}n_{b,i}=T_B$. \label{line_nbi}
    %where $n_{b,i}\in \{\lfloor r_{b,i}T_B\rfloor, \lceil r_{b,i}T_B\rceil\}$ is taken so that $\sum_{i\in[K]}n_{b,i}=T_B$.
    
    Observe empirical mean $\bQb{b}$ of the batch.
    
    Update the \textit{stored} empirical average as
    \begin{equation}\label{ineq_update_unused}
        \bQbp{b-K+1} = \bQbp{b-K} + \brb{b}  (\bQb{b} - \bQbp{b-K}), 
    \end{equation}
    where 
    $\brb{b} \bm{Q}$ denotes the element-wise product. 
}
Recommend $J(T) = \Jsl(\bQbp{1},\bQbp{2},\dots,\bQbp{B})$.
\caption{Delayed optimal tracking (DOT)\label{alg_lowerfollow}}
\end{algorithm}

\subsection{Matching algorithm}\label{sec_match_alg}

In this section, we derive an algorithm that has a rate that almost matches $\Cpart_B$. 
For any $\eps>0$, let an ($\eps$-)optimal solution\footnote{Again, $\eps>0$ can be arbitrarily small and is introduced to avoid the issue on the existence of the supremum.} of Eq.~\eqref{def_rgob} be
$(\brBs(\cdot), \Jsl(\cdot))
= 
(\brbsl{1}(\cdot),\brbsl{2}(\cdot),\brbsl{3}(\cdot),\dots,\brbsl{B}(\cdot), \Jsl(\cdot))$
with its objective at least
\begin{equation}
\inf_{\bQ^B\in\EQ^{KB}}\,
\inf_{\bP: \Jsl(\bQ^B) \notin \Ist(\bP)} 
 \frac{H(\bP)}{B}
\sum_{i\in[K], b\in[B]} \rbsl{b,i}(\bQ^b) D(Q_{b,i}\Vert P_i)
\ge \Cpart_B - \epsilon.
\end{equation}

We cannot naively follow the allocation $\rbsl{b,i}(\bQ^b)$ because it requires the empirical mean of the current batch $\bQb{b}$, which is not fully available until the end of the current batch. 
The delayed optimal tracking algorithm (DOT, \algoref{alg_lowerfollow}) addresses this issue.
This algorithm divides $T$ rounds into $B+K-1$ batches,
where the $b$-th batch corresponds to $(bT_B+1,bT_B+2,\dots,(b+1)T_B)$-th rounds for $T_B=T/(B+K-1)$.
Here, for simplicity, we assume that
$T$ is a multiple of $B+K-1$.
In the other case, we can reach almost the same result by just ignoring the last $T-(B+K-1)\lfloor T/(B+K-1)\rfloor$ rounds.

The crux of \algoref{alg_lowerfollow} is to determine allocation $\brb{b}$ by using the \textit{stored} empirical mean $\bQbp{1},\bQbp{2},\dots\bQbp{B}$ rather than the true empirical mean $\bQb{1},\bQb{2},\dots\bQb{B+K-1}$:
The first $K$ batches are devoted to uniform exploration and the samples are stored in a queue (though this explanation is not strict, in that the actual procedure is done after taking the mean of the stored samples).
At the $b$-th batch for $b\ge K+1$, we draw each arm $i$ for $n_{b,i} \approx T_B r_{b,i}$ times\footnote{The $-K$ in Line \ref{line_nbi} of \algoref{alg_lowerfollow} is for the ceiling fractional values. This is reflected in the term $T'$ in \thmref{thm_upper}. If $T$ is large compared to $B,K$, the difference between $T$ and $T'$ does not matter.}, where
$\br_b = (r_{b,i})_{i\in[K]}$ is determined
based on the stored samples in the queue.
When drawing arm $i$ for $n_{b,i}$ times,
we dequeue and open $n_{b,i}$ stored samples instead of opening actual $n_{b,i}$ samples, the latter of which are enqueued and kept unopened.\footnote{In other words, observations do not affect the allocation until opened.}

By the nature of this algorithm, we can ensure the following property.
\begin{lem}\label{lem_weight_nonrandom}
Assume that we run \algoref{alg_lowerfollow}.
Then, the following inequality always holds:
\begin{equation}\label{ineq_nonrandomdiv}
\frac{1}{B+K-1}
\sum_{i\in[K], b\in[B+K-1]} r_{b,i} D(Q_{b,i}||P_i)
\ge
\frac{B}{B+K-1}
\frac{ \Cpart_{B}-\eps}{H(\bP)}.
\end{equation}
\end{lem}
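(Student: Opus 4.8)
The plan is to exploit the fact that every quantity appearing in \eqref{ineq_nonrandomdiv} is \emph{deterministic} given the sample path, and to combine the stored-mean update \eqref{ineq_update_unused} with the convexity of the KL divergence to collapse the realized weighted divergence onto the objective of \eqref{def_rgob} evaluated at the stored means $(\bQbp{1},\dots,\bQbp{B})$; the $\epsilon$-optimality of $(\brBs,\Jsl)$ then finishes the proof.

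First I would fix an arm $i$ and split $\sum_{b\in[B+K-1]} r_{b,i} D(Q_{b,i}\Vert P_i)$ into the $K$ uniform-exploration batches and the $B-1$ tracking batches. For $b\le K$ one has $r_{b,i}=\Ind[i=b]$, and since arm $i$ is sampled only in batch $i$ during exploration, $Q_{i,i}$ equals the post-exploration mean $(\bQ_K)_i = Q'_{1,i}$; thus the exploration batches contribute exactly $D(Q'_{1,i}\Vert P_i)$. For a tracking batch I write $b=K+j$ with $j\in[B-1]$, so $r_{K+j,i}=\rbsl{j,i}(\bQbp{1},\dots,\bQbp{j})$ and, componentwise, \eqref{ineq_update_unused} reads $Q'_{j+1,i}=(1-r_{K+j,i})Q'_{j,i}+r_{K+j,i}Q_{K+j,i}$.

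The engine of the argument is convexity: for both the Bernoulli and Gaussian families the map $q\mapsto D(q\Vert P_i)$ is convex in the mean, so applying it to the convex combination above gives $D(Q'_{j+1,i}\Vert P_i)\le(1-r_{K+j,i})D(Q'_{j,i}\Vert P_i)+r_{K+j,i}D(Q_{K+j,i}\Vert P_i)$, i.e.\ the per-batch lower bound $r_{K+j,i}D(Q_{K+j,i}\Vert P_i)\ge D(Q'_{j+1,i}\Vert P_i)-D(Q'_{j,i}\Vert P_i)+r_{K+j,i}D(Q'_{j,i}\Vert P_i)$. Summing over $j\in[B-1]$ telescopes the first two terms down to $D(Q'_{B,i}\Vert P_i)-D(Q'_{1,i}\Vert P_i)$, and the leftover exploration contribution $D(Q'_{1,i}\Vert P_i)$ cancels the negative term, leaving $\sum_b r_{b,i}D(Q_{b,i}\Vert P_i)\ge D(Q'_{B,i}\Vert P_i)+\sum_{j=1}^{B-1}\rbsl{j,i}D(Q'_{j,i}\Vert P_i)$. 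Using $\rbsl{B,i}\le1$ to write $D(Q'_{B,i}\Vert P_i)\ge\rbsl{B,i}D(Q'_{B,i}\Vert P_i)$ and summing over $i\in[K]$ turns the right-hand side into precisely the objective of \eqref{def_rgob} evaluated at $\bQ^B=(\bQbp{1},\dots,\bQbp{B})$.

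It then remains to invoke $\epsilon$-optimality. Because the realized recommendation is $J(T)=\Jsl(\bQbp{1},\dots,\bQbp{B})$, on the misidentification event $J(T)\notin\Ist(\bP)$ the defining guarantee of $(\brBs,\Jsl)$ gives $\sum_{i\in[K],j\in[B]}\rbsl{j,i}(\bQbp{1},\dots,\bQbp{j})D(Q'_{j,i}\Vert P_i)\ge B(\Cpart_B-\epsilon)/H(\bP)$, and dividing by $B+K-1$ is exactly \eqref{ineq_nonrandomdiv}. I expect the one genuinely delicate point to be this final invocation: $\epsilon$-optimality only certifies the bound for those $\bP$ with $\Jsl(\bQ^B)\notin\Ist(\bP)$, so \eqref{ineq_nonrandomdiv} is to be read for the instances on which the algorithm errs, which is exactly the regime in which \lemref{lem_weight_nonrandom} is later used to bound the PoE. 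Everything preceding it is purely algebraic and holds on every sample path, which is what makes this deterministic bookkeeping, rather than concentration, the heart of the argument; the main conceptual hurdle is simply recognizing that the delayed stored-mean update was engineered so that convexity-plus-telescoping reproduces the oracle objective at the stored means.
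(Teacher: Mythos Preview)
Your proof is correct and follows essentially the same approach as the paper. The paper packages the convexity-plus-telescoping step into an auxiliary inductive lemma (Lemma~\ref{lem_weight}), whereas you unfold it as a direct telescoping sum over the tracking batches; the two presentations are equivalent, and your observation that the final invocation of $\epsilon$-optimality requires $\Jsl(\bQbp{1},\dots,\bQbp{B})\notin\Ist(\bP)$ is a subtlety the paper states only implicitly (the lemma is indeed applied only under this event in the proof of Theorem~\ref{thm_upper}).
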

\lemref{lem_weight_nonrandom} states that the empirical divergence of DOT
given in the LHS of Eq.\,\eqref{ineq_nonrandomdiv}
almost matches the upper bound $\Cpart_{B}/H(\bP)$
for sufficiently large $B$ despite delayed allocation.
Using this property, we obtain the following bound.

\begin{thm}\label{thm_upper}{\rm (Performance bound of \algoref{alg_lowerfollow})}
The PoE of 
the DOT algorithm
satisfies
\begin{equation}
\Prob[J(T) \notin \Ist(\bP)]
\le 
\exp\left(-
\frac{BT'}{B+K-1}\frac{\Cpart_{B}-\epsilon}{H(\bP)} + f(K, B,T)\right),
\end{equation}
where $T' = T-(B+K-1)K$ and
$f(K, B, T) = 2BK \log( 2T )$.
\end{thm}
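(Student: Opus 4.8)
The plan is to bound $\PoE$ by a retrospective change of measure over the possible empirical outcomes of DOT, and then to invoke \lemref{lem_weight_nonrandom} to certify that every outcome causing a misidentification has exponentially small probability with exactly the stated exponent. Write $\hat\bQ=(\bQb{1},\dots,\bQb{B+K-1})$ for the batch-wise empirical means that DOT actually opens. Since the recommendation $J(T)=\Jsl(\bQbp{1},\dots,\bQbp{B})$, together with all allocations $n_{b,i}$, is a deterministic function of $\hat\bQ$ through the stored-mean recursion, the error event decomposes as the disjoint union $\{J(T)\notin\Ist(\bP)\}=\bigsqcup_{\bq:\,\Jsl(\bq)\notin\Ist(\bP)}\{\hat\bQ=\bq\}$ over outcome configurations $\bq$. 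Hence $\PoE=\sum_{\bq:\,\Jsl(\bq)\notin\Ist(\bP)}\Prob_{\bP}[\hat\bQ=\bq]$, and it suffices to (i) bound each term by the method of types and (ii) count the surviving configurations.

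For step (i) I first fix a configuration $\bq$. The crucial point is that DOT's queue mechanism makes the per-batch allocation $n_{b,i}$ a function only of samples opened strictly before batch $b$ (the stored means $\bQbp{1},\dots,\bQbp{b-K}$), hence measurable with respect to the past, while the samples opened inside batch $b$ are fresh i.i.d.\ draws from $P_i$. Conditioning sequentially over batches and using the standard type estimate $\Prob[\hat q=q\mid n\text{ draws from }P_i]\le\exp(-n\,D(q\Vert P_i))$ (exact in the Bernoulli case, and valid after discretization in the Gaussian case), the conditional probabilities telescope to $\Prob_{\bP}[\hat\bQ=\bq]\le\exp\!\big(-\sum_{b,i}n_{b,i}D(q_{b,i}\Vert P_i)\big)$. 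This is precisely where the delay is indispensable: it renders the allocation predictable relative to the samples to which it is applied, so that the likelihood factorizes cleanly across batches.

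Next I lower-bound the exponent uniformly over error configurations. On each such $\bq$ we have $\Jsl(\bq)\notin\Ist(\bP)$, so \lemref{lem_weight_nonrandom} applies and gives $\sum_{b,i}r_{b,i}D(q_{b,i}\Vert P_i)\ge B(\Cpart_B-\epsilon)/H(\bP)$ (its displayed inequality multiplied through by $B+K-1$). Combining this with the rounding guarantee $n_{b,i}\ge r_{b,i}(T_B-K)$ from Line~\ref{line_nbi} and the identity $T_B-K=T'/(B+K-1)$ yields $\sum_{b,i}n_{b,i}D(q_{b,i}\Vert P_i)\ge\frac{BT'}{B+K-1}\frac{\Cpart_B-\epsilon}{H(\bP)}$, the target exponent, for \emph{every} error configuration. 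For step (ii), each component $q_{b,i}$ is the mean of at most $T_B\le T$ opened samples and so takes at most $2T$ distinct values in the Bernoulli case; with $K(B+K-1)$ such components (which is $\le 2BK$ in the relevant regime $B\gtrsim K$ where $B\to\infty$) the number of error configurations is at most $(2T)^{2BK}$, whose logarithm is $f(K,B,T)=2BK\log(2T)$. Multiplying the uniform per-term bound by this count gives the theorem. The one genuinely technical point is the Gaussian case, where the continuum of empirical means must be discretized into cells of width $O(1/T)$ and truncated at a polynomial range, the excluded tail being negligible because there $D(q_{b,i}\Vert P_i)$ is itself large; I expect this discretization/truncation bookkeeping to be the main obstacle, since \lemref{lem_weight_nonrandom} has already absorbed the entire minimax structure and the rest is a routine large-deviations union bound.
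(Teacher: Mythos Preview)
Your proposal is correct and follows essentially the same route as the paper: decompose the error event over configurations of batch empirical means, bound each configuration's probability by a conditional Chernoff/type estimate (exploiting that DOT's delayed queue makes $n_{b,i}$ measurable with respect to strictly earlier batches), invoke \lemref{lem_weight_nonrandom} on every error configuration to lower-bound the exponent, and finish with a union bound whose cardinality contributes the $f(K,B,T)$ term. The only notable difference is the Gaussian case: you propose discretizing $\EQ$ into cells of width $O(1/T)$ and truncating, whereas the paper integrates the Gaussian density directly over $[-T,T]^{K(B+K-1)}$, bounding the normalizing constant $n_{b,i}/\sqrt{2\pi}\le T$ and the Lebesgue volume by $(2T)^{K(B+K-1)}$; both approaches yield the same polynomial prefactor and your anticipated truncation event is exactly the paper's $\EB=\bigcup_{i,b}\{|Q_{b,i}|\ge T\}$.
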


The following corollary is immediate since $f(K,B,T)=o(T)$ holds for fixed $K,B$.
\begin{cor}
The worst-case rate of the DOT algorithm $\pi_{\mathrm{DOT},T}$ 
satisfies
\begin{align}
R(\{\pi_{\mathrm{DOT},T}\})\ge
\frac{B}{B+K-1}
(\Cpart_{B}-\eps).
\end{align}
\end{cor}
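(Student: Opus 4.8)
The plan is to substitute the finite-time PoE bound of \thmref{thm_upper} directly into the definition \eqref{rate_interest} of the worst-case rate, and then verify that the two correction terms appearing there vanish after normalization by $T$. Concretely, I would fix an arbitrary instance $\bP\in\EP^K$ and start from
\[
\Prob[J(T) \notin \Ist(\bP)] \le \exp\left(-\frac{BT'}{B+K-1}\frac{\Cpart_B - \eps}{H(\bP)} + f(K,B,T)\right),
\]
with $T' = T-(B+K-1)K$ and $f(K,B,T)=2BK\log(2T)$. Taking logarithms of the reciprocal and dividing by $T$ rearranges this into
\[
\frac{1}{T}\log\frac{1}{\PoE} \ge \frac{B}{B+K-1}\,\frac{T'}{T}\,\frac{\Cpart_B-\eps}{H(\bP)} - \frac{f(K,B,T)}{T}.
\]

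Next I would take $\liminf_{T\to\infty}$ on both sides with $B$ and $K$ held fixed. Since $B$, $K$, $\eps$, $\Cpart_B$, and $H(\bP)$ are all constants in $T$, only two elementary limits are needed: $T'/T = 1-(B+K-1)K/T \to 1$ and $f(K,B,T)/T = 2BK\log(2T)/T \to 0$. Together these yield
\[
\liminf_{T\to\infty}\frac{1}{T}\log\frac{1}{\PoE} \ge \frac{B}{B+K-1}\,\frac{\Cpart_B-\eps}{H(\bP)}.
\]
Finally I would multiply through by $H(\bP)$ and take the infimum over $\bP\in\EP^K$. Because the resulting lower bound $\frac{B}{B+K-1}(\Cpart_B-\eps)$ no longer depends on $\bP$, the infimum leaves it unchanged, and comparing with \eqref{rate_interest} delivers the claimed inequality for $R(\{\pi_{\mathrm{DOT},T}\})$.

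I do not expect any genuine obstacle here, since all of the analytical content lives in \thmref{thm_upper} (and, behind it, in \lemref{lem_weight_nonrandom}); this corollary is purely a matter of unwinding the definition of the rate and discarding the $o(T)$ correction. The only point that requires a moment's care is confirming that $f(K,B,T)$ is indeed $o(T)$ for fixed $K,B$ — immediate because $\log(2T)=o(T)$ — and that the finite offset $(B+K-1)K$ hidden in $T'$ washes out under division by $T$. Both are routine, so the \emph{hard part} is really just keeping $B$ fixed throughout: the bound degrades to a factor $B/(B+K-1)<1$ that cannot be removed at this stage, and closing the remaining gap to $\Cpartinf$ is deferred to the limit $B\to\infty$ treated separately.
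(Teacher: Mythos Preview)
Your proposal is correct and matches the paper's approach: the paper simply notes that the corollary is immediate from \thmref{thm_upper} because $f(K,B,T)=o(T)$ for fixed $K,B$, which is exactly the computation you carry out in detail.
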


\subsection{Optimality\label{sec_optimallim}}

In this section, we show that the rate $R(\pi_{\mathrm{DOT}})$ of DOT becomes arbitrarily close to optimal when we take a sufficiently large number of batches $B$.

\begin{thm}{\rm (Optimality of DOT)}\label{thm_optperform}
Assume $H(\bP)$ be such that $\Cpart<\infty$. Then, the limit 
\begin{equation}
\Cpartinf := \lim_{B\rightarrow \infty} \Cpart_B \label{thm_exist}
\end{equation}
exists.
Moreover, for any $\eta>0$, there exist parameters $B, \eps$ such that the following holds on the performance of
the DOT algorithm:
\begin{equation}\label{ineq_etaperform}
R(\{\pi_{\mathrm{DOT},T}\})=\inf_{\bP\in\EP} H(\bP) 
\liminf_{T \rightarrow \infty} \frac{
\log(
1/\PoE
)
}{T}
\ge \Cpartinf - \eta.
\end{equation}
\end{thm}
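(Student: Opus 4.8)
The plan is to prove the two assertions separately and to note that, once existence is in hand, the performance bound \eqref{ineq_etaperform} is almost immediate. The whole argument runs on a single sandwich: for each fixed batch number, I pit the universal lower bound of \thmref{thm_lower} against the achievability bound of \thmref{thm_upper}, both evaluated at the DOT algorithm itself.

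First I would record the uniform bounds $0\le \Cpart_B\le \Cfull<\infty$, the upper inequality being \corref{cor_oneb} and finiteness being the standing hypothesis $\Cpart<\infty$; thus $\{\Cpart_B\}_{B\in\mathbb{N}}$ is a bounded sequence with finite $\liminf$ and $\limsup$. Now fix a batch number $B$ and accuracy $\eps$, which singles out one concrete DOT algorithm $\{\pi_{\mathrm{DOT},T}\}$. Dividing the bound of \thmref{thm_upper} by $T$, taking $\liminf_{T\to\infty}$, and using $T'/T\to1$ together with $f(K,B,T)=o(T)$, its worst-case rate satisfies
\[
R(\{\pi_{\mathrm{DOT},T}\})\ \ge\ \frac{B}{B+K-1}\left(\Cpart_B-\eps\right).
\]
On the other hand \thmref{thm_lower} holds for \emph{every} algorithm and \emph{every} batch number, so applied to this same DOT and to an arbitrary $B'$ it gives $R(\{\pi_{\mathrm{DOT},T}\})\le \Cpart_{B'}$, i.e. $R(\{\pi_{\mathrm{DOT},T}\})\le \inf_{B'\in\mathbb{N}}\Cpart_{B'}$. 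Letting $\eps\downarrow0$ and chaining the two bounds produces, for every $B$,
\[
\frac{B}{B+K-1}\,\Cpart_B\ \le\ \inf_{B'\in\mathbb{N}}\Cpart_{B'}.
\]

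This single inequality forces convergence. Taking $\limsup_{B\to\infty}$ on the left and using that the prefactor $\tfrac{B}{B+K-1}\to1$ may be dropped in the limit (legitimate precisely because $\Cpart_B$ is bounded) gives $\limsup_B\Cpart_B\le \inf_{B'}\Cpart_{B'}\le \liminf_{B'}\Cpart_{B'}$, while trivially $\liminf\le\limsup$. Hence the two coincide, the limit $\Cpartinf$ exists, and in fact $\Cpartinf=\inf_B\Cpart_B$. For the performance claim I would then combine $\Cpart_B\ge \inf_{B'}\Cpart_{B'}=\Cpartinf$ with the achievability display above: given $\eta>0$, since $\tfrac{B}{B+K-1}(\Cpartinf-\eps)\to\Cpartinf$ as $B\to\infty$ and $\eps\downarrow0$, choose $B,\eps$ with $\tfrac{B}{B+K-1}(\Cpartinf-\eps)\ge\Cpartinf-\eta$; then
\[
R(\{\pi_{\mathrm{DOT},T}\})\ \ge\ \frac{B}{B+K-1}\left(\Cpart_B-\eps\right)\ \ge\ \frac{B}{B+K-1}\left(\Cpartinf-\eps\right)\ \ge\ \Cpartinf-\eta,
\]
which is exactly \eqref{ineq_etaperform}.

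The step that demands the most care is the existence claim, and within it the passage of the factor $\tfrac{B}{B+K-1}$ to $1$ inside the $\limsup$: this is where boundedness of $\Cpart_B$ via \corref{cor_oneb} is indispensable, and it is also why the hypothesis $\Cpart<\infty$ cannot be dropped. I would also remark that a more ``direct'' route through monotonicity of $B\mapsto\Cpart_B$ appears delicate: comparing $\Cpart_B$ and $\Cpart_{B'}$ for non-aligned $B,B'$ forces one to merge or split batches whose boundaries do not match, and neither batch-oracle can simulate the other (the finer one lacks future information, the coarser one cannot resolve intra-batch empirical means). The sandwich sidesteps this entirely, using only bounds already established, and as a bonus identifies $\Cpartinf$ simultaneously with $\inf_B\Cpart_B$ and with the limiting achievable rate of DOT.
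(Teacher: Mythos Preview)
Your proof is correct and takes essentially the same approach as the paper: sandwich the rate of a fixed DOT instance between the achievability bound of \thmref{thm_upper} and the universal impossibility bound of \thmref{thm_lower}, then let the batch number grow. The only difference is the finishing step: the paper derives a Cauchy-type estimate $|\Cpart_{B_0}-\Cpart_{B_1}|\le\eta$ for large $B_0$, whereas you go directly through $\limsup_B\Cpart_B\le\inf_{B'}\Cpart_{B'}\le\liminf_{B'}\Cpart_{B'}$, which is a bit cleaner and yields $\Cpartinf=\inf_B\Cpart_B$ as an immediate byproduct (the paper records only the inequality $\inf_B\Cpart_B\le\Cpartinf$ in Remark~\ref{rem_etaopt}).
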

\begin{remark}{\rm ($\eta$-optimality)}\label{rem_etaopt}
Since
$R(\{\pi_T\})
\le
\Cpart_B$ holds for any sequence of algorithms $\{\pi_T\}$ and $B\in\mathbb{N}$, we have
\begin{equation}
R(\{\pi_T\})
\le
\inf_{B\in\mathbb{N}}\Cpart_B\le
\liminf_{B\to\infty}\Cpart_{B}
=
\Cpartinf
\end{equation}
from 
Eq.\,\eqref{thm_exist}.
\end{remark}
Remark \ref{rem_etaopt} states that the rate of the DOT algorithm given in Eq.\,\eqref{ineq_etaperform}
is optimal up to $\eta$ for arbitrary small $\eta>0$. This essentially states that, no algorithm can be
$\eta$-better than DOT in terms of the rate against the worst-case instance $\bP$. 
In that sense, DOT is asymptotically minimax optimal. Note that the discussion here first takes $T \rightarrow \infty$, and then takes a large $B$, which implies that $B$ should be taken as $o(T)$ (i.e., each batch has a sufficiently large number of samples).

\begin{remark}{\rm (Utility of the DOT algorithm)}
Although DOT (\algoref{alg_lowerfollow}) has an asymptotically optimal rate $\Cpartinf$, it is difficult to calculate, or to even approximate, the optimal solution of
Eq.\,\eqref{def_rgob}
since it is not an optimization of a finite-dimensional vector but an optimization of function 
$\brB$,
which has a high input dimension proportional to $B$.
In this sense, the DOT algorithm as well as Theorem~\ref{thm_optperform} is purely theoretical thus far, and the existence of a computationally tractable and provably optimal algorithm is an important open question.
\end{remark}
Instead of pursuing the direction of batching, the subsequent sections rather focus on implementing Algorithm \ref{alg:R_track}.

\section{Learning}
\label{sec_nn}

In this section, we propose a method to learn $\br(\bQ)$ of Eq.~\eqref{def_rgo} by utilizing a neural network to practically realize $\Cfull$-tracking in Algorithm \ref{alg:R_track}.  
Throughout this section, we assume a class of algorithms %satisfying $J(\bQ)\in \Ist(\bQ)$, that is, algorithms that 
that recommend the empirical best arm, which is guaranteed to be optimal when $\EP=\EQ$ (see Remark~\ref{rem_J}). 
This section focuses on Bernoulli arms, and with a slight abuse of notation, we use $P_i$ and $Q_i$ to denote the true mean and the empirical mean of arm $i$, respectively.

\subsection{Learning allocation}

Let $\br_{\btheta}(\bQ): \EQ^K\to \simplex{K}$ be a neural network with a set of parameters $\btheta$.
We consider alternately optimizing $r_{\btheta}(\cdot)$ and the worst pair of instances $(\bP,\bQ)$, and
we update $\btheta$
via mini-batch gradient descent.
Given a complexity function $H(\bP)$, Eq.~\eqref{def_rgo} is defined as the minimum over all $(\bP,\bQ)$ such that the best arm is different. Our learning method (\algoref{alg:train-policy}) uses $L$ mini-batches.
Let
\begin{align}
    E(\bP, \bQ; \btheta) \coloneqq H(\bP) \sumK r_{\btheta,i}(\bQ) \KL{Q_i}{P_i}.
    \label{eq:neg-exponent}
\end{align}
Given allocation $\br_{\btheta}$, Eq.~\eqref{eq:neg-exponent} is the negative log-likelihood (rate)
of the bandit instance $\bP$ against the empirical means $\bQ$.
At each batch, it obtains the pair $\bPmin, \bQmin$ such that Eq.~\eqref{eq:neg-exponent} is minimized. 
Specifically, for each iteration, we sample $\Ntrue$ candidates of true means $\bP$ uniformly from $\EP^K$, then for each $\bP$, we sample $\Nemp$ values of empirical means $\bQ\in\EQ^K$ such that $\mathcal{I}^*(\bQ) \cap \mathcal{I}^*(\bP) = \emptyset$ uniformly at random.

\begin{algorithm}[t]
\SetKwInOut{Input}{input}
\SetNoFillComment
\DontPrintSemicolon
\Input{learning rate $\eta$}
\While{\text{not converged}}{
        \tcc{Compute $\bPmin$ and $\bQmin$ which minimizes the negative exponent}  %
        Set $E^{\min} \leftarrow \infty$.  \\
        \For{$n_1 = 1, 2, \dots, \Ntrue$}{
            Sample true parameters $\bP$ from $\EP$ uniformly at random. \\
            \For{$n_2 = 1, 2, \dots, \Nemp$}{
                Sample $\bQ$ from $\{\bQ\in\EQ^{K}: \mathcal{I}^*(\bQ) \cap \mathcal{I}^*(\bP) = \emptyset\}$. \\
                \If{$E(\bP, \bQ; \btheta) < E^{\min}$}{
                    $\bPmin \leftarrow \bP$,\quad
                    $\bQmin \leftarrow \bQ$,\quad
                    $E^{\min} \leftarrow E(\bPmin, \bQmin; \btheta)$
                }
            }
        }
        Update parameters $\btheta \leftarrow \btheta - \eta \nabla_{\btheta} E(\bPmin, \bQmin; \btheta)$.  \\
}
\caption{Gradient descent method for $\btheta$}
\label{alg:train-policy}
\end{algorithm}

\begin{algorithm}[t]
\SetNoFillComment
\DontPrintSemicolon
Draw each arm once.\;
\For{$t = K+1, 2, \ldots, T$}{
    Draw arm $\argmax \left( r_{\btheta,i}(\hatbQ(t-1)) - N_i(t-1)/(t-1) \right)$.\\
}
\Return $J(T) = \argmax \hatQ_i(T)$ (empirical best arm).
\caption{$\Cfull$-Tracking by Neural Network (TNN)} \label{alg:optnn}
\end{algorithm}

\subsection{Tracking by neural network}
\label{subsec:tnn_theory}

Having trained $\br_{\btheta}$, we propose the $\Cfull$-Tracking by the neural network (TNN) algorithm
(\algoref{alg:optnn}), which is an implementation of $\Cfull$-Tracking by the trained neural network.
This algorithm
draws the arm such that the current fraction of samples $N_i(t-1)/(t-1)$ is the most insufficient compared to the learned allocation $\br_{\btheta}(\hatbQ(t-1))$.

\section{Simulation \label{sec_sim}}

This section numerically tests the performance of the TNN algorithm.\footnote{The source code of the simulation is at \url{https://github.com/tsuchhiii/fixed-budget-bai}\,.}
We compared the performance of TNN (\algoref{alg:optnn}) with two algorithms:
Uniform algorithm, which samples each arm in a round-robin fashion, and Successive Rejects (SR, \citealp{Audibert10}), where the entire trial is divided into segments before the game starts, and one arm with the smallest estimated mean reward is removed for each segment.

We consider Bernoulli bandits with $K = 3$ arms,
where each mean parameter is in $[0,1]$.
In particular, we consider the three sets of true parameters:
(instance 1) $\bP = (0.5, 0.45, 0.3)$,
(instance 2) $\bP = (0.5, 0.45, 0.05)$, 
and
(instance 3) $\bP = (0.5, 0.45, 0.45)$.
The number of the rounds $T$ is fixed to $2000$,
and we repeated the experiments for $10^5$ times.

\subsection{Training neural networks}\label{sec_training}
Here, we show experimental details for training neural networks for the TNN algorithm discussed in Section~\ref{subsec:tnn_theory}.

We used the complexity measure $H_1(\bP) = \sum_{i \ne \ist(\bP)} (P^* - P_i)^{-2}$ as a standard choice of $H(\bP)$. 
We used the neural network with four layers (including the input layer and output layer),
where we used the ReLU for the activation functions and introduced the skip-connection~\citep{He2016resnet} between each hidden layer to make training the network easier.
To obtain the map to $\simplex{K}$, we adopted the softmax function.
The number of nodes in the hidden layers was fixed to $K \times 3$.
We used AdamW~\citep{loshchilov2018decoupled} with a learning rate $10^{-3}$ and weight decay $10^{-7}$ to update the parameters.

For training the neural network, we ran Algorithm~\ref{alg:train-policy} with $\Ntrue = 32$ and $\Nemp = 90$.
Furthermore, to allow the neural network to easily learn $\br$, the elements of $\bP=(P_1,P_2,\dots,P_K)$ were sorted beforehand.

\subsection{Experimental results}

\begin{figure}[t]
    \begin{center}
    \setlength{\subfigwidth}{.32\linewidth}
    \addtolength{\subfigwidth}{-.32\subfigcolsep}
    \begin{minipage}[t]{\subfigwidth}
        \centering
        \subfigure[Instance 1, PoE]{\includegraphics[scale=0.52]{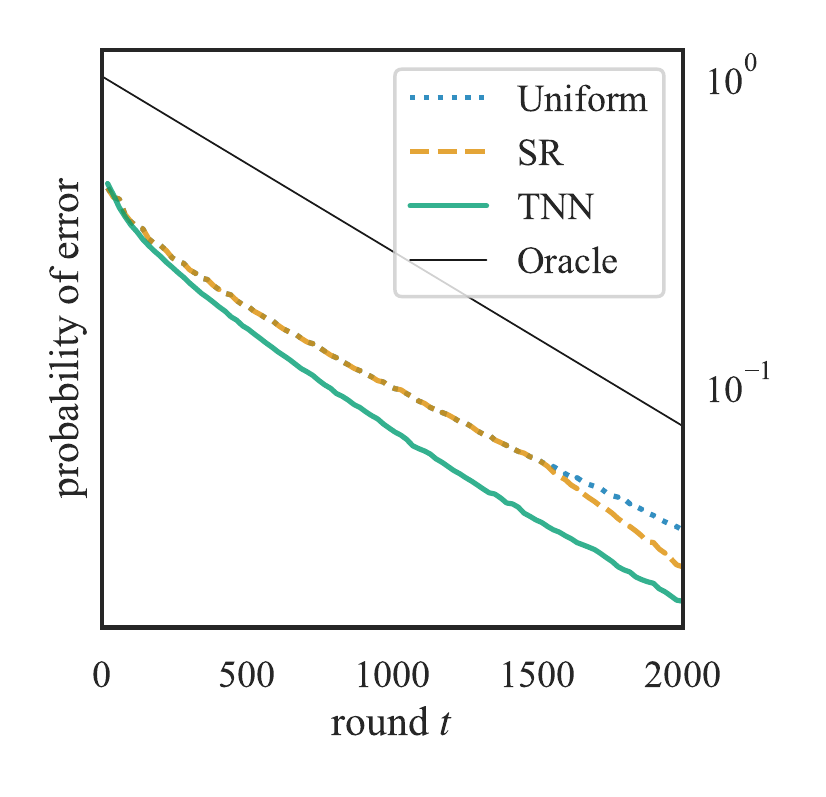}
        }\vspace{-1mm}
    \end{minipage}\hfill
    \begin{minipage}[t]{\subfigwidth}
        \centering
        \subfigure[Instance 2, PoE]{\includegraphics[scale=0.52]{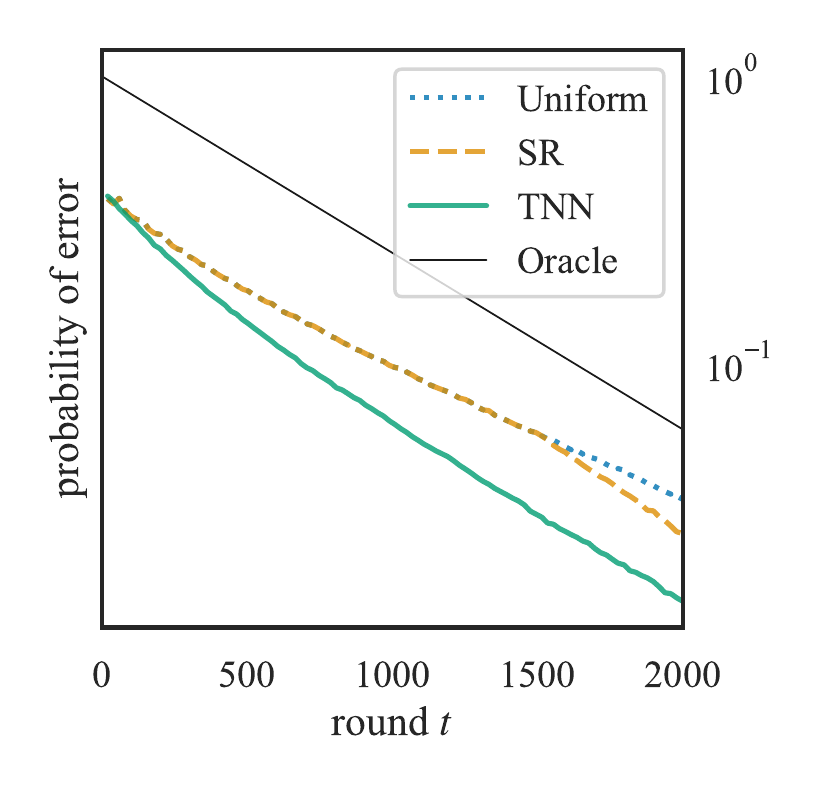}
        }\vspace{-1mm}
    \end{minipage}\hfill
    \begin{minipage}[t]{\subfigwidth}
        \centering
        \subfigure[Instance 3, PoE]{\includegraphics[scale=0.52]{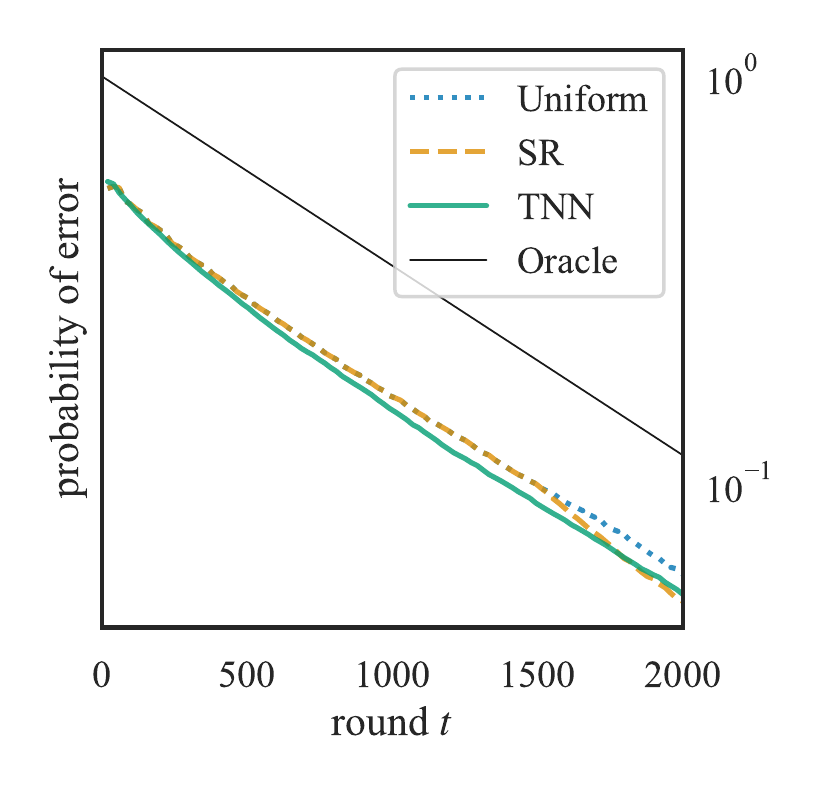}
        }\vspace{-1mm}
    \end{minipage}
    % ==============================
    \begin{minipage}[t]{\subfigwidth}
        \centering
        \subfigure[Instance 1, tracking error]{\includegraphics[scale=0.52]{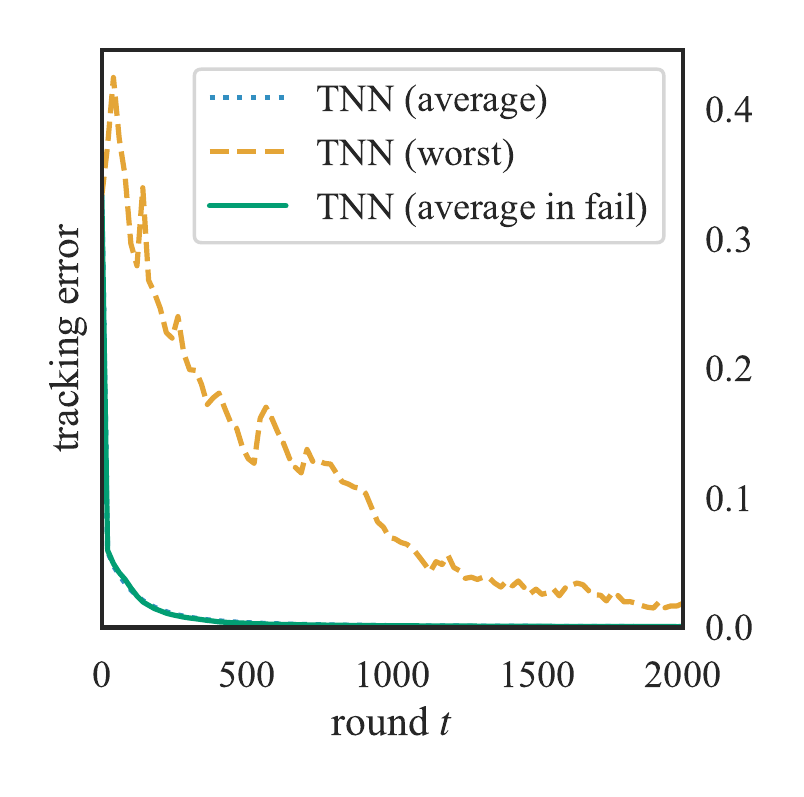}
        }
    \end{minipage}\hfill
    \begin{minipage}[t]{\subfigwidth}
        \centering
        \subfigure[Instance 2, tracking error]{\includegraphics[scale=0.52]{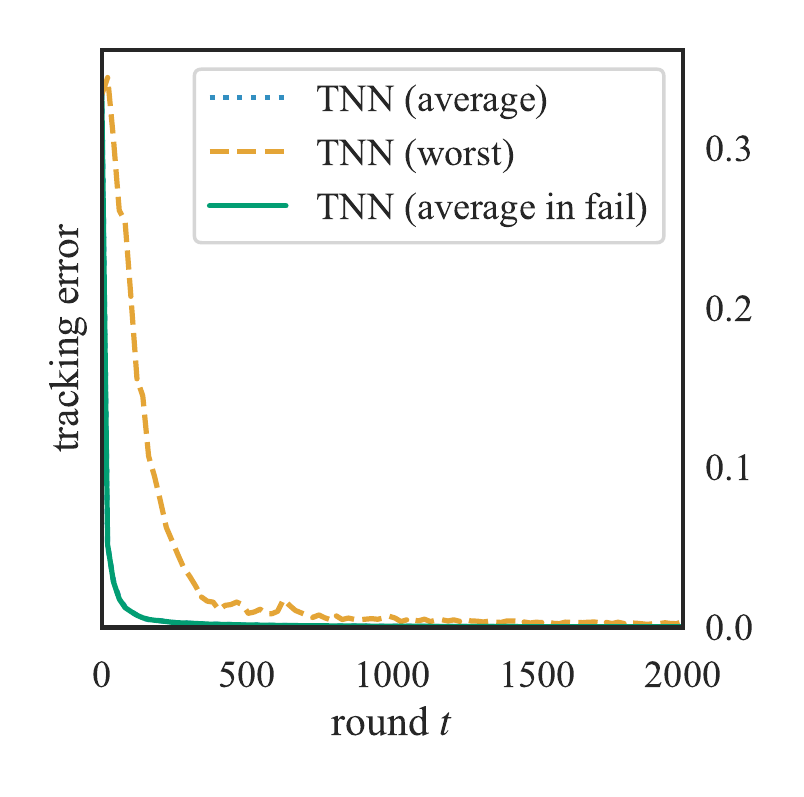}
        }
    \end{minipage}\hfill
    \begin{minipage}[t]{\subfigwidth}
        \centering
        \subfigure[Instance 3, tracking error]{\includegraphics[scale=0.52]{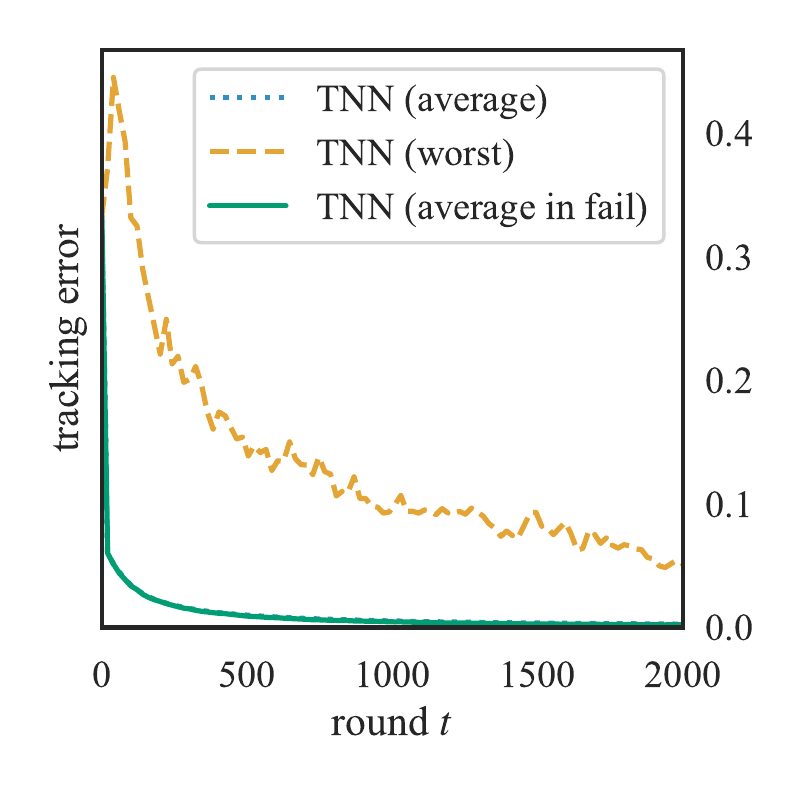}
        }
    \end{minipage}
    % ==============================
    \end{center}
%    \vspace{-2mm}
    \caption{
        Bernoulli bandits,
        $K=3, T=2000$, average over $10^5$ trials.
    }\label{fig:3_Ber}
\end{figure}

Figure~\ref{fig:3_Ber} illustrates the results of our simulations.
Each column corresponds to the result for each instance.

The first row ((a)--(c))
shows the PoE of the compared methods when the arm with the largest empirical mean is regarded as the estimated best arm $J(t)$ at each round $t$.
Here, the black line represents $\exp(-t \inf_{\bQ}\sum_{i}r_{\btheta,i}(\bQ)D(Q_i\Vert P_i))$, which corresponds to the exponent of the oracle algorithm that can perfectly track the allocation
$r_{\btheta,i}(\bQ)$.
Therefore, the asymptotic slope of TNN cannot be better than that of the black line. 
We can see from the figures that the slope of the TNN is close to the oracle algorithm and performs better than or comparable to the other algorithms.
Note that this is the result for fixed time horizon $T$.
Although the final slope of SR seems to be steeper than that of TNN, it is just due to the fact that SR is not anytime and is an algorithm that divides $T$ rounds into several segments.

The second row ((d)--(f)) shows the tracking error of the TNN algorithm, which is defined as
$\mathrm{disc}(t) = \max_{i\in[K]} |r_{ i}(\hatbQ(t)) - N_i(t)/t|$,
which measures the discrepancy between the ideal allocation $r_{i}(\bQ(t))$ and the actual allocation $N_i(t)/t$.
If this quantity is $o(T)$ in almost all tests (including those in which the algorithm failed to recommend the best arm) and in all instances, then we can guarantee $\Cfull = \Cpartinf$.
The labels TNN (average), TNN (worst), and TNN (average in fail) correspond to
the average tracking error of all trials,
the worst-case tracking error 
and the average tracking error of all failed trials, respectively.
The fact that `TNN (worst)' is small at $T=\mbox{2,000}$ implies that the gap between $\Cfull$ and $\Cpartinf$ is small, which supports the 
reasonableness of algorithms based on $\Cfull$.

\section{Conclusion \label{sec_conc}}

This paper considered the fixed-budget best arm identification problem. We identified the minimax rate $\Cpartinf$ on the exponent of the probability of error by introducing a matching algorithm (DOT algorithm). 
Optimization of rate $\Cpartinf$ is very challenging to implement, and we considered learning a simpler optimization problem of rate $\Cfull$ by using a neural network (TNN algorithm). The TNN algorithm outperformed existing algorithms.
Several possible lines of future work include the following points.
\begin{easylist}
@ More scalable learning of $\br(\bQ)$: TNN adopted a neural network to obtain the oracle allocation $\br(\bQ)$.
While its empirical results are promising and support our theoretical findings, the current experiment is limited to the case of $K=3$ arms because the learning is very costly even for small $K$. 
A more sophisticated learning algorithm is desired to realize $\Cfull$-tracking for larger $K$.

@ Learning a complexity measure $H(\bP)$: We have assumed the complexity measure $H(\bP)$ is given exogenously. A principled way to choose $H(\bP)$ is an interesting future work.

@ Identifying the existence (or nonexistence) of the gap: Although the empirical results suggest that $\Cfull$ is very close (or maybe equal) to the optimal rate $\Cpartinf$ for the Bernoulli case, a formal analysis of this gap for general cases is demanded since the DOT algorithm to achieve $\Cpartinf$ is computationally almost infeasible.

@ A bound for another rate measure: we defined the worst-case rate by \eqref{rate_interest}, which first takes the limit of $T$ and then takes the worst-case instance $\bP$.
Another natural choice of the rate would be to exchange them, that is, to consider
\begin{align}
R'(\{\pi_T\})=
\liminf_{T\to\infty}\inf_{\bP\in \EP^K}\frac{H(\bP)}{T}\log(1/\PoE) \le R(\{\pi_T\}).
\end{align}
Whereas Theorems~\ref{thm_trackable} and \ref{thm_lower} on the upper bounds of $R(\pi)$ are still valid for $R'(\{\pi_T\})\le R(\{\pi_T\})$, the current achievability analysis does not apply and analyzing the tightness of $\Cpartinf$ for $R'(\{\pi_T\})$ is an open problem.
\end{easylist}

\section{Acknowledgement}

The authors thank the reviewers and the associated editor of NeurIPS 2022 for constructive discussion and suggested corrections. To include the discussion during the review process, we have rewritten Sections \ref{sec_instopt} and \ref{sec_lowerproof}. We have also added Section \ref{sec_suboptfc}.
All errors in the paper are our own.
TT was supported by JST, ACT-X Grant Number JPMJAX210E, Japan and JSPS, KAKENHI Grant Number JP21J21272, Japan.
JH was supported by JSPS KAKENHI Grant Number JP21K11747.

\clearpage

\bibliographystyle{unsrtnat}
\bibliography{references}

\clearpage

\appendix

\section{Notation table \label{sec_notation}}

Table \ref{tbl_not} summarizes our notation.

\begin{table}[t!]
\begin{center}
\caption{Major notation}
\label{tbl_not}
\renewcommand{\arraystretch}{1.1} 
\begin{tabular}{lll} 
symbol & definition %& first appearance
\\ \hline
$K$ & number of the arms \\
$T$ & number of the rounds \\
$B$ & number of the batches \\
$T_B$ & $=T/(B+K-1)$ \\
$T'$ & $=T-(B+K-1)K$ \\
$I(t)$ & arm selected at round $t$ \\
$X(t)$ & reward at round $t$ \\
$J(T)$ & recommendation arm at the end of round $T$\\
$\EP$ & hypothesis class of $\bP$\\
$\EQ$ & distribution of estimated parameter of $\bQ$\\
$\bP\in\EP^K$ & true parameters \\
$P_i\in\EP$ & $i$-th component of $\bP$ \\
$\Ist = \Ist(\bP)$ & set of best arms under parameter $\bP$ \\
$\ist(\bP)$ & one arm in $\Ist(\bP)$ (taken arbitrary in a deterministic way) \\
$\bQ\in\EQ^K$ & estimated parameters of $\bP$ \\
$Q_i\in\EQ$ & $i$-th component of $\bQ$ \\
$\bQ_b\in\EQ^K$ & estimated parameters of $b$-th batch \\
$Q_{b,i}\in\EQ$ & $i$-th component of $\bQ_b$ \\
$\bQ^b\in\EQ^{Kb}$ & $=(\bQ_1,\bQ_2,\dots,\bQ_b)$ \\
$\bQ_b'\in\EQ^K$ & stored parameters (in \algoref{alg_lowerfollow})\\
$Q_{b,i}'\in\EQ$ & $i$-th component of $\bQ_b'$ \\
$D(Q\Vert P)$ & KL divergence between $Q$ and $P$ \\
$\simplex{K}$ & probability simplex in $K$ dimensions \\
$\br \in \simplex{K}$ & allocation (proportion of arm draws)\\
$r_i$ & $i$-th component of $\br$\\
$\br_b \in \simplex{K}$ & allocation at $b$-th batch\\
$r_{b,i}$ & $i$-th component of $\br_b$\\
$\br^b$ & $=(\br_1,\br_2,\dots,\br_b)$\\
$\bm{n}_{b}$ & numbers of draws of  \algoref{alg_lowerfollow} at $b$-th batch\\
$n_{b,i}$ & $i$-th component of $\bm{n}_{b}$. Note that $n_{b,i}\ge r_{b,i}(T_B-K)$ holds.\\
$J(\bQ^B)$ & recommendation arm given $\bQ^B$ \\
$(\brBs, \Jsl)$ & $\eps$-optimal allocation \\
$H(\cdot)$ & complexity measure of instances \\
$R(\{\pi_T\})$ & worst-case rate of PoE of sequence of algorithms $\{\pi_T\}$ in \eqref{rate_interest}\\
$\Cfull$ & best possible $R(\{\pi_T\})$ for oracle algorithms in \eqref{def_rgo}\\
$\Cpart_B$ & best possible $R(\{\pi_T\})$ for $B$-batch oracle algorithms in \eqref{def_rgob}\\
$\Cpartinf$ & $\lim_{B \rightarrow \infty} \Cpart_B$. Limit exists (\thmref{thm_optperform}) \\
$\btheta$ & model parameter of the neural network \\
$\br_{\btheta}$ & allocation by a neural network with model parameters $\btheta$\\
$r_{\btheta,i}$ & $i$-th component of $\br_{\btheta}$\\
\end{tabular}
\end{center}
\end{table}

\section{Uniform optimality in the fixed-confidence setting\label{sec_instopt}}

For sufficiently small $\delta>0$, the asymptotic sample complexity for the FC setting is known. 

Namely, any fixed-confidence $\delta$-PAC algorithm require at least $\Cconf(\bP)\log \delta^{-1}+o(\log \delta^{-1})$ samples, where 
\begin{equation}\label{ineq_opt_fixedconf}
\Cconf(\bP) = 
\left(
\sup_{\br(\bP) \in \simplex{K}} \inf_{\bP': \ist(\bP') \notin \Ist(\bP)}
\sum_{i=1}^K r_i D(P_i \Vert P'_i)
\right)^{-1}.
\end{equation}
\cite{Garivier2016} proposed $C$-Tracking and $D$-Tracking algorithms that have a sample complexity bound that matches Eq.~\eqref{ineq_opt_fixedconf}. 
This achievability bound implies that there is no tradeoff between the performances for different instances $\bP$, and sacrificing the performance for some $\bP$ never improves the performance for another $\bP'$.
To be more specific, for example,
even if we consider a ($\delta$-correct) algorithm that has a suboptimal sample complexity of $2 \Cconf(\bP)\log \delta^{-1}+o(\log \delta^{-1})$ for some instance $\bP$, it is still impossible to achieve sample complexity better than $\Cconf(\bQ)\log \delta^{-1}+o(\log \delta^{-1})$ for another instance $\bP'$ as far as the algorithm is $\delta$-PAC.

\section{Suboptimal performance of fixed-confidence algorithms in view of fixed-budget setting}\label{sec_suboptfc}

This section shows that an optimal algorithm for the FC-BAI can be arbitrarily bad for the FB-BAI. 

For a small $\eps \in (0, 0.1)$, consider a three-armed Bernoulli bandit instance with
$\bP^{(1)} = (0.6, 0.5, 0.5-\eps)$ and $\bP^{(2)} = (0.4, 0.5, 0.5-\eps)$. 
Here, the best arm is arm $1$ (resp. arm $2$) in the instance $\bP^{(1)}$ (resp. $\bP^{(2)}$).

Let $\brconf(\bP) = (\rconf_1(\bP), \rconf_2(\bP), \rconf_3(\bP))$ be the optimal FC allocation of Eq.~\eqref{ineq_opt_fixedconf}.
The following characterizes the optimal allocation for $\bP^{(1)},\bP^{(2)}$:
\begin{lem}\label{lem_fcrate_bound_one}
The optimal solution of Eq.~\eqref{ineq_opt_fixedconf} for instance $\bP^{(1)}$ satisfies the following:
\begin{equation}\label{ineq_thetaone}
\rconf_1(\bP^{(1)}), \rconf_2(\bP^{(1)}), \rconf_3(\bP^{(1)}) \ge 0.07 = \Theta(1).
\end{equation}
\end{lem}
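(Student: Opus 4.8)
The plan is to reduce the inner infimum in \eqref{ineq_opt_fixedconf} to a minimum over the two possible ``confusing'' competitors and then read off a lower bound on each coordinate of the maximizer. Write $d(a,b) := D(\mathrm{Ber}(a)\Vert\mathrm{Ber}(b))$, and recall that for $\bP^{(1)}=(0.6,0.5,0.5-\eps)$ the unique best arm is arm $1$, so any alternative $\bP'$ with $\ist(\bP')\neq 1$ must satisfy $\mu'_j\ge\mu'_1$ for some $j\in\{2,3\}$. First I would argue, as in the standard FC lower bound, that the cheapest such alternative moves only arms $1$ and $j$ to a common value $x$, so that the inner infimum equals $\min_{j\in\{2,3\}} C_{1j}(\br)$ with $C_{1j}(\br):=\inf_x\{r_1 d(0.6,x)+r_j d(\mu_j,x)\}$. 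A one-line calculation shows the minimizing $x$ is the $(r_1,r_j)$-weighted mean of the two means, so each $C_{1j}$ is finite and strictly positive whenever $r_1,r_j>0$. Consequently $\brconf(\bP^{(1)})$ is the maximizer of $g(\br):=\min(C_{12}(\br),C_{13}(\br))$ over $\br\in\simplex{3}$, and I denote the optimal value by $g^*$.

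The main trick is that, because the infimum defining $C_{1j}$ is over $x$, \emph{any} fixed choice of $x$ gives an upper bound on $C_{1j}$, hence on $g^*$, in which a single coordinate appears. Plugging $x=0.6$ into $C_{12}$ kills the arm-$1$ term and gives $g^*\le C_{12}\le r_2\, d(0.5,0.6)$; plugging $x=0.5$ into $C_{12}$ gives $g^*\le C_{12}\le r_1\, d(0.6,0.5)$; and plugging $x=0.6$ into $C_{13}$ gives $g^*\le C_{13}\le r_3\, d(0.5-\eps,0.6)$. Writing $r_i$ for $\rconf_i(\bP^{(1)})$ and rearranging, I obtain
\begin{equation}
r_1\ge \frac{g^*}{d(0.6,0.5)},\qquad
r_2\ge \frac{g^*}{d(0.5,0.6)},\qquad
r_3\ge \frac{g^*}{d(0.5-\eps,0.6)} .
\end{equation}
Since $d(0.6,0.5)$ and $d(0.5,0.6)$ are fixed numbers of size $\approx 0.02$, the bounds on $r_1$ and $r_2$ reduce to a single lower bound on $g^*$.

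To lower-bound $g^*$ I would exhibit one concrete feasible allocation $\br_0$ and evaluate $g(\br_0)=\min(C_{12}(\br_0),C_{13}(\br_0))$ in closed form via the weighted-mean formula. Two structural observations make this clean: for fixed $\br$, $C_{12}(\br)$ is independent of $\eps$, while $C_{13}(\br)$ is nondecreasing in $\eps$ (the $1$--$3$ gap only widens), so $g^*$ is nondecreasing in $\eps$ and its infimum over $(0,0.1)$ is approached as $\eps\to 0$, where arms $2$ and $3$ coincide and a symmetric allocation can be analyzed directly. This pins down a uniform constant $g^*\ge c_0>0$, which together with the displayed inequalities yields $r_1,r_2\ge 0.07$ after substituting the numerical divergences (the slack here is comfortable, since $c_0/d(0.5,0.6)$ is well above $0.07$).

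The hard part is the bound on $r_3$. Here the denominator $d(0.5-\eps,0.6)$ grows from $\approx 0.02$ at $\eps=0$ to $\approx 0.08$ at $\eps=0.1$, so one cannot combine the worst-case denominator with the small-$\eps$ value of $g^*$ without losing the constant. Instead I must track the $\eps$-dependence of both sides simultaneously and verify $g^*(\eps)\ge 0.07\, d(0.5-\eps,0.6)$ throughout $(0,0.1)$. This requires a lower bound on $g^*(\eps)$ sharper than the deliberately loose $x=0.6$ plug-in used to isolate $r_3$: I would evaluate $g$ at an $\eps$-dependent allocation chosen to keep $C_{12}$ and $C_{13}$ near-balanced, so that the lower bound on $g^*$ tracks the growth of the denominator as the two competitors become most asymmetric near $\eps=0.1$. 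Confirming that the balanced value stays above the required multiple of $d(0.5-\eps,0.6)$ is the one genuinely numerical step, and it is where the explicit constant in the statement is fixed; the essential conclusion, as the ``$=\Theta(1)$'' in the lemma signals, is simply that every coordinate is bounded away from zero uniformly in $\eps$.
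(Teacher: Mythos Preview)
Your plan is essentially the paper's argument: lower-bound the optimal value of \eqref{ineq_opt_fixedconf} by evaluating at a concrete allocation, upper-bound it in terms of a single coordinate $r_i$ via a specific alternative (your ``plug in a particular $x$'' is exactly this step), and combine. The paper takes $\br=(1/3,1/3,1/3)$ with the midpoint threshold $0.55$ to obtain $g^*\ge 1/600$, then uses $\bP'=(0.5,0.5,0.5-\eps)$ to get $g^*\le 0.021\,r_1$, hence $r_1\ge 0.07$, and finishes $r_2,r_3$ by ``similar discussion''.

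You are right to single out $r_3$ as the delicate case: with the paper's constants the ``similar discussion'' yields only $r_3\ge (1/600)/d(0.5-\eps,0.6)\approx 0.02$ at $\eps$ near $0.1$, so it does not literally deliver $0.07$ there. Your proposed repair---tracking $g^*(\eps)$ against $d(0.5-\eps,0.6)$ via an $\eps$-adapted feasible allocation---is the natural one and would close this gap; since only the $\Theta(1)$ conclusion is used downstream in Section~\ref{sec_suboptfc}, you could alternatively just weaken the constant. Either way, you should actually carry out the numerical verification you describe for $r_3$ rather than leaving it as a plan.
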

\begin{lem}\label{lem_fcrate_bound_two}
The optimal solution of Eq.~\eqref{ineq_opt_fixedconf} for instance $\bP^{(2)}$ satisfies the following:
\begin{equation}\label{ineq_thetaeps}
\rconf_1(\bP^{(2)}), \rconf_2(\bP^{(2)}), \rconf_3(\bP^{(2)}) = \Theta(\eps^2), \Theta(1), \Theta(1).
\end{equation}
\end{lem}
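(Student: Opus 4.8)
The plan is to reduce the inner infimum in Eq.~\eqref{ineq_opt_fixedconf} to its most-confusing-alternative form and then solve a two-dimensional max--min problem. Abbreviate $\rconf_i := \rconf_i(\bP^{(2)})$; since $\bP^{(2)}=(0.4,0.5,0.5-\eps)$ has the unique best arm $2$, we have $\Ist(\bP^{(2)})=\{2\}$. By the standard decomposition underlying Eq.~\eqref{ineq_opt_fixedconf}, the cheapest alternative dethroning arm $2$ moves only arm $2$ and one challenger $i\in\{1,3\}$ to a common mean, so that for any $\br\in\simplex{3}$,
\[
\inf_{\bP':\,\ist(\bP')\notin\{2\}}\sum_i r_i D(P_i\Vert P'_i)=\min\{g_1(r_1,r_2),\,g_3(r_2,r_3)\},
\]
where $g_1(r_1,r_2)=\inf_x[r_1 D(0.4\Vert x)+r_2 D(0.5\Vert x)]$ and $g_3(r_2,r_3)=\inf_x[r_2 D(0.5\Vert x)+r_3 D(0.5-\eps\Vert x)]$ are two-arm transportation costs (ties at $\mu'_i=\mu'_2$ are handled by taking the infimum). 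It thus suffices to analyse $V^*:=\sup_{\br\in\simplex{3}}\min\{g_1,g_3\}$. First I would pin down $V^*=\Theta(\eps^2)$: evaluating $g_3$ at $x=0.5-\eps$ gives $\min\{g_1,g_3\}\le g_3\le D(0.5\Vert 0.5-\eps)=\Theta(\eps^2)$ for every $\br$, so $V^*\le\Theta(\eps^2)$; for the matching lower bound I would test $\br^0=(\eps^2,\tfrac{1-\eps^2}{2},\tfrac{1-\eps^2}{2})$, using Pinsker $D(p\Vert x)\ge 2(p-x)^2$ to get $g_3\ge 2\tfrac{r_2 r_3}{r_2+r_3}\eps^2=\Omega(\eps^2)$ and splitting the $g_1$-infimum at $x=0.45$ (the arm-$2$ term is $\Omega(1)$ on $\{x\le 0.45\}$, the arm-$1$ term is $\ge\eps^2 D(0.4\Vert 0.45)=\Omega(\eps^2)$ on $\{x>0.45\}$) to get $g_1=\Omega(\eps^2)$.

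Next I would show the optimum balances the two costs, $g_1=g_3=V^*=\Theta(\eps^2)$. Any optimizer is interior, since setting a coordinate to $0$ drives one of $g_1,g_3$ to $0$ and hence $\min\{g_1,g_3\}$ to $0<V^*$. On the interior, $g_1$ and $g_3$ are continuous, $g_1$ is strictly increasing in $r_1$ and $g_3$ strictly increasing in $r_3$, because the transportation minimizer lies strictly between the two relevant means and so both divergence terms are strictly positive. A standard perturbation argument then applies: if, say, $g_1>g_3$ at the optimum, shifting an infinitesimal mass from arm $1$ to arm $3$ lowers $g_1$ but raises $g_3$, strictly increasing $\min\{g_1,g_3\}$ and contradicting optimality, while the reverse shift rules out $g_1<g_3$. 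Hence $g_1=g_3=V^*$.

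Finally I would read off the coordinatewise orders by plugging convenient values of $x$ into the transportation costs, using the balanced value $g_1=g_3=V^*=\Theta(\eps^2)$. From $g_3\le r_2 D(0.5\Vert 0.5-\eps)$ (at $x=0.5-\eps$) and $g_3\le r_3 D(0.5-\eps\Vert 0.5)$ (at $x=0.5$), together with both KL terms being $\Theta(\eps^2)$, we obtain $\rconf_2,\rconf_3=\Omega(1)$, hence $\Theta(1)$. From $g_1\le r_1 D(0.4\Vert 0.5)$ (at $x=0.5$) we get $\rconf_1=\Omega(\eps^2)$. For the matching upper bound on $\rconf_1$, I would observe that at the minimizer $x^*$ of $g_1$ we have $\rconf_2 D(0.5\Vert x^*)\le g_1=\Theta(\eps^2)$; since $\rconf_2=\Theta(1)$ this forces $D(0.5\Vert x^*)=O(\eps^2)$, so $x^*\ge 0.45$ for small $\eps$, whence $D(0.4\Vert x^*)\ge D(0.4\Vert 0.45)>0$ is bounded below and $\rconf_1 D(0.4\Vert 0.45)\le g_1=\Theta(\eps^2)$ yields $\rconf_1=O(\eps^2)$. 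Combining gives $\rconf_1=\Theta(\eps^2)$, $\rconf_2=\Theta(1)$, $\rconf_3=\Theta(1)$.

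The hard part will be the balancing step together with the bound $\rconf_1=O(\eps^2)$: the other estimates hold one-sidedly for every allocation, but ruling out a wastefully large $\rconf_1$ genuinely uses optimality, since a suboptimal allocation could place $\Theta(1)$ mass on arm $1$ while still keeping $\min\{g_1,g_3\}=\Theta(\eps^2)$. This is where I would be most careful, as it relies on the strict monotonicity and continuity of $g_1,g_3$ on the interior and on controlling the location of the transportation minimizer $x^*$; by contrast, the KL order estimates $D(0.5\Vert 0.5-\eps)=\Theta(\eps^2)$ and $D(0.4\Vert 0.45)=\Theta(1)$ and the most-confusing-alternative reduction are standard and routine.
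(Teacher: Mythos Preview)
Your proof is correct and follows the same overall skeleton as the paper's: show the optimal value is $\Theta(\eps^2)$, deduce $\rconf_2,\rconf_3=\Omega(1)$ by evaluating the objective at well-chosen alternatives, and then pin down $\rconf_1$ via the balance of the two challenger costs at the optimum. The execution of the last step differs. The paper invokes Lemma~4 of \cite{Garivier2016}, which gives the equalization condition
\[
(\rconf_2+\rconf_1)\,I_{\rconf_2/(\rconf_2+\rconf_1)}(P_2,P_1)=(\rconf_2+\rconf_3)\,I_{\rconf_2/(\rconf_2+\rconf_3)}(P_2,P_3)
\]
in terms of Chernoff information, and then reads off $\rconf_1=\Theta(\eps^2)$ from the asymptotics of $I_{\alpha}$ as $\alpha\to 1$. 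You instead establish the balance $g_1=g_3$ from scratch by a mass-shifting perturbation argument and then locate the transportation minimizer $x^*$ directly to extract $\rconf_1=O(\eps^2)$. Your route is more elementary and self-contained, and it also spells out the lower bound $\rconf_1=\Omega(\eps^2)$ explicitly via $g_1\le \rconf_1 D(0.4\Vert 0.5)$; the paper's route is shorter but relies on an external structural result. In substance the two are the same argument: for Bernoulli arms the minimizer of $r_2 D(P_2\Vert x)+r_i D(P_i\Vert x)$ is exactly the weighted mean $\alpha P_2+(1-\alpha)P_i$ with $\alpha=r_2/(r_2+r_i)$, so $(\rconf_2+\rconf_i)I_{\alpha}(P_2,P_i)$ in the paper is precisely your $g_i$, and the cited equalization is your $g_1=g_3$.
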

These two lemmas are derived in Section \ref{subsec_lemfcrates}.

Assume that we run an FC algorithm that draws arms according to allocation $\brconf(\cdot)$ in an FB problem with $T$ rounds.
Under the parameters $\bP^{(2)}$, it draws arm $1$ for $O(\eps^2) + o(T)$ times.
Letting $\delta = \bP^{(1)}[J(T) = 2]$,
Lemma 1 in \cite{kaufman16a} implies that 
\begin{align}
(T O(\eps^2) + o(T)) D(0.4 \Vert 0.6)
&\ge
d(\bP^{(2)}[J(T) = 2], \bP^{(1)}[J(T) = 2])\\
&\ge
d(1/2, \bP^{(1)}[J(T) = 2]) 
\text{\ \ \ \ (assuming the consistency of algorithm)}
\\
&= \frac{1}{2}\left(
\log\left(\frac{1}{2\delta}\right) +
\log\left(\frac{1}{2(1-\delta)}\right)
\right)
\\
&\ge
\frac{1}{2}
\log\left(\frac{1}{2\delta}\right),
\end{align}
which implies 
\begin{align}\label{ineq_epssqt}
\bP^{(1)}[J(T) = 2] = \delta \ge 
\frac{1}{2} \exp\left(-2 
\left(T O(\eps^2) + o(T)\right)
D(0.4 \Vert 0.6)\right).
\end{align}
The exponent of Eq.\eqref{ineq_epssqt} can be arbitrarily small as $\eps \rightarrow +0$. In other words, the rate of this algorithm can be arbitrarily close to $0$, while the complexity is $H_1(\bP^{(1)}) = \Theta(1)$. This fact implies that the optimal algorithm for the FC-BAI has an arbitrarily bad performance in terms of the minimax rate of the FB-BAI. 

\subsection{Proofs of Lemmas \ref{lem_fcrate_bound_one} and \ref{lem_fcrate_bound_two}}\label{subsec_lemfcrates}

\begin{proof}[Proof of Lemma \ref{lem_fcrate_bound_one}]
For $\br = (1/3, 1/3, 1/3)$, we have 
\begin{align}
\inf_{\bP': \ist(\bP') \notin \Ist(\bP^{(1)})} \sum_{i=1}^K r_i D(P_i^{(1)} \Vert P'_i)
&> \frac{1}{3} 
\min\left(
D(0.6 \Vert 0.55), D(0.5 \Vert 0.55)
\right)\\
&\text{\ \ \ \ (by $\ist(\bP') \notin \Ist(\bP^{(1)})$ implies $P_1' < 0.55$ or $P_2' > 0.55$ or $P_3' > 0.55$)}\\
&\ge 1/600.
\end{align}
We have
\begin{align}
\inf_{\bP': \ist(\bP') \notin \Ist(\bP)} \sum_{i=1}^K \rconf_1(\bP^{(1)}) D(P_i \Vert P'_i)
&\le \rconf_1(\bP^{(1)}) D(0.6 \Vert 0.5) 
\\
&\text{\ \ \ \ (on instance $\bP' = (0.5, 0.5, 0.5-\eps)$)}\\
&\le 0.021 r_1,
\end{align}
which implies $\rconf_1(\bP^{(1)}) \ge (1/600) \times (1/0.021) \ge 0.07$ for the optimal allocation $\rconf_1(\bP^{(1)})$. 
Similar discussion yields $r_2, r_3 \ge 0.07$.
\end{proof}

\begin{proof}[Proof of Lemma \ref{lem_fcrate_bound_two}]
For $\br = (1/3, 1/3, 1/3)$, we have 
\begin{align}
\lefteqn{
\inf_{\bP': \ist(\bP') \notin \Ist(\bP^{(2)})} \sum_{i=1}^K r_i D(P_i^{(2)} \Vert P'_i)
}\\
&> \frac{1}{3} 
\min\left(
D(0.5 \Vert 0.5-\eps/2), D(0.5-\eps \Vert 0.5-\eps/2)
\right),\\
&\text{\ \ \ \ (by $\bP' \notin \Ist(\bP^{(2)})$ implies $P_2' < 0.5 - \eps/2$ or $P_1' > 0.5 - \eps/2$ or $P_3' > 0.5 - \eps/2$)}\\
&\ge \frac{\eps^2}{6}.\\
&\text{\ \ \ \ (by Pinsker's inequality)}
\end{align}
We have
\begin{align}
\inf_{\bP': \ist(\bP') \notin \Ist(\bP^{(2)})} \sum_{i=1}^K \rconf_i(\bP^{(2)}) D(P_i^{(2)} \Vert P'_i)
&\le \rconf_2(\bP^{(2)}) D(0.5 \Vert 0.5-\eps/2),
\\
&\text{\ \ \ \ (on instance $\bP' = (0.4, 0.5-\eps/2, 0.5-\eps/2)$)}
\end{align}
which implies $\rconf_i(\bP^{(2)}) = \Omega(1)$ for the optimal allocation. Similar discussion yields $\rconf_3(\bP^{(2)}) = \Omega(1)$.

In the rest of this proof, we show $\rconf_1(\bP^{(2)}) = O(\eps^2)$. For the ease of exposition, we drop $(\bP^{(2)})$ to denote $\brconf = (\rconf_1, \rconf_2, \rconf_3)$.
Lemma 4 in \cite{Garivier2016} states that the optimal solution satisfies:
\begin{align}\label{ineq_optfccond}
(\rconf_2 + \rconf_1) I_{\frac{\rconf_2}{\rconf_2+\rconf_1}}(P_2^{(2)}, P_1^{(2)})
= 
(\rconf_2 + \rconf_3) I_{\frac{\rconf_2}{\rconf_2+\rconf_3}}(P_2^{(2)}, P_3^{(2)}),
\end{align}
where 
\begin{align}
I_{\alpha}(P_2^{(2)}, P_i^{(2)})
=
\alpha D\left(
P_2^{(2)}, \alpha P_2^{(2)} + (1-\alpha) P_i^{(2)}\right)
+
(1-\alpha) D\left(
P_i^{(2)}, \alpha P_2^{(2)} + (1-\alpha) P_i^{(2)}\right).
\end{align}
We can confirm that 
\begin{equation}
(\rconf_2 + \rconf_3) I_{\frac{\rconf_2}{\rconf_2+\rconf_3}}(P_2^{(2)}, P_3^{(2)})
= \Theta(1) \times \Theta(\eps^2),
\end{equation}
and 
\begin{equation}
(\rconf_2 + \rconf_1) \ge \rconf_2
= \Theta(1),
\end{equation}
which, combined with Eq.\eqref{ineq_optfccond}, implies that
\begin{equation}
I_{\frac{\rconf_2}{\rconf_2+\rconf_1}}(P_2^{(2)}, P_1^{(2)}) = \Theta(\eps^2),
\end{equation}
which implies $\rconf_1 = \Theta(\eps^2)$.
\end{proof}

\section{Extension to wider models}\label{append_extension}

In the main body of the paper, we assumed that $P\in \EP$ and $Q\in\EQ$ are Bernoulli or Gaussian distributions.
Many parts of the results of the paper can be extended to exponential families or distributions over a support set $\ES\subset \mathbb{R}$.

Let us consider an exponential family of form
\begin{align}
\diff P(x|\theta)=\exp(\theta^{\top}\Texp(x)-A(\theta))\diff F(x),
\end{align}
where $F$ is a base measure and $\theta \in \Theta\subset \mathbb{R}^d$ is a natural parameter.
We assume that $A'(\theta)=\Ex_{X\sim F(\cdot|\theta)}[\Texp(X)]$ has the inverse $(A')^{-1}: \mathrm{im}(\Texp) \to \Theta$, where $\mathrm{im}(\Texp)$ is the image of $\Texp$.

Let $\EP$ be a class of reward distributions.
$\EP$ can be the family of distributions over a known support $\ES \subset \mathbb{R}$.
We can also consider the case where $\EP$ is the above exponential family with a possibly restricted parameter set $\Theta' \subset \Theta$.
For example, $\EP$ can be the set of Gaussian distributions with mean parameters in $[0,1]$ and variances in $(0,\infty)$.

When we derive the lower bounds and construct algorithms, we introduce $\EQ$ as a class of distributions corresponding to the estimated reward distributions of the arms.
We set $\EQ=\EP$ when $\EP$ is a family of distributions over a known support $\ES \subset \mathbb{R}$.
When we consider a natural exponential family with parameter set $\Theta'\subset \Theta$,
we set $\EQ$ as this exponential family with parameter set $\Theta$, so that the estimator of $P_i$ is always within $\EQ$.
For example, if we consider $\EP$ as a class of Gaussians with means in $[0,1]$ and variances in $(0,\infty)$, $\EQ$ is the class of all Gaussians with means in $(-\infty,\infty)$ and variances in $(0,\infty)$.

In Algorithm~\ref{alg_lowerfollow}, we use a convex combination of distributions $Q$ and $Q'$.
The key property used in the analysis is the convexity of KL divergence between distributions.
When we consider the family $\EP$ of distributions over support set $\ES$, 
the convexity
\begin{align}
D(\alpha Q+(1-\alpha) Q'\, \Vert P)
\le \alpha D(Q\Vert P)
+
(1-\alpha)D(Q' \Vert P)
\end{align}
holds for any $P,Q,Q'\in \EQ$ when we define $\alpha Q+(1-\alpha) Q'$ as the mixture of $Q$ and $Q'$ with weight $(\alpha,1-\alpha)$.
When $\EP$ is the exponential family,
the convexity of the KL divergence holds when
$\alpha Q+(1-\alpha) Q'$ is defined as the distribution in this family such that
the expectation of the sufficient statistics $\Texp(X)$ is equal to $\alpha \Ex_{X\sim Q}[\Texp(X)]+(1-\alpha)\Ex_{X\sim Q'}[\Texp(X)]$.
Note that this corresponds to taking the convex combination of the empirical means when we consider Bernoulli distributions or Gaussian distributions with a known variance.

By the convexity of the KL divergence, most parts of the analysis apply to $\EP$ in this section and we straightforwardly obtain the following result.

\begin{prop}
Theorems~\ref{thm_trackable} and \ref{thm_lower}, Corollary~\ref{cor_oneb}, and Lemma~\ref{lem_weight_nonrandom} hold
under the models $\EP$ with the definition of the convex combination in this section.
\end{prop}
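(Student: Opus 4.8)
The plan is to observe that the appendix proofs of \thmref{thm_trackable}, \thmref{thm_lower}, \corref{cor_oneb}, and \lemref{lem_weight_nonrandom} never exploit the closed-form Bernoulli or Gaussian expression for $\KL{Q}{P}$; they invoke the model only through two abstract properties: (i) the KL divergence $\KL{Q}{P}$ is well-defined and finite for every $Q\in\EQ$, $P\in\EP$, and (ii) the convexity inequality $\KL{\alpha Q + (1-\alpha)Q'}{P} \le \alpha\KL{Q}{P} + (1-\alpha)\KL{Q'}{P}$ holds for the convex combination defined in this section. I would first isolate, proof by proof, which of these two properties is actually used, and then verify each property separately for the support-set and exponential-family models.

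For the two lower bounds, the arguments are change-of-measure (transportation) bounds combined with the typical-set analysis. At their core they control $\Prob[J(T)\notin\Ist(\bP)]$ through the expected log-likelihood ratio between the empirical law $\bQ$ and the true law $\bP$, i.e.\ through $\sum_i r_i \KL{Q_i}{P_i}$, using only the generic identity $\Ex_{X\sim Q}[\log(\diff Q/\diff P)(X)] = \KL{Q}{P}$ and the additivity of log-likelihood ratios over i.i.d.\ samples. None of these steps refers to the functional form of the divergence, so they transfer verbatim once property (i) holds, which it does by construction since $\EQ\supseteq\EP$ (exponential-family case) or $\EQ=\EP$ (support-set case). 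Likewise, \corref{cor_oneb} is a purely structural comparison of two nested optimizations --- the $B=1$ batch oracle recovers the full oracle of \thmref{thm_trackable} --- and is therefore entirely distribution-agnostic.

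The only place where property (ii) is genuinely needed is \lemref{lem_weight_nonrandom}. There the DOT update \eqref{ineq_update_unused} forms, componentwise, the convex combination $\bQbp{b-K+1} = (1-r_{b,i})\bQbp{b-K} + r_{b,i}\bQb{b}$ of the stored and the freshly observed empirical distributions, and the proof lower-bounds the stored divergence by an allocation-weighted sum of per-batch divergences precisely via (ii). For the support-set model, where the convex combination is the literal mixture of distributions, (ii) is the standard joint convexity of the KL divergence in its first argument and requires no further argument.

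The main obstacle is verifying (ii) for the exponential-family model, where the convex combination is \emph{not} a mixture but the family member whose mean sufficient statistic equals $\alpha\Ex_{X\sim Q}[\Texp(X)] + (1-\alpha)\Ex_{X\sim Q'}[\Texp(X)]$. Here I would pass to the Bregman-divergence representation of the KL divergence: for two members of the family, $\KL{Q}{P} = B_{A^*}(\mu_Q,\mu_P)$, where $\mu_Q,\mu_P$ are the mean parameters and $B_{A^*}$ is the Bregman divergence generated by the convex conjugate $A^*$ of the log-partition $A$ (this follows from $\KL{Q}{P}=A(\theta_P)-A(\theta_Q)-A'(\theta_Q)^\top(\theta_P-\theta_Q)$ together with Bregman duality). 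Since the chosen convex combination is exactly a convex combination in the mean coordinate $\mu$, and since $\mu\mapsto B_{A^*}(\mu,\mu_P) = A^*(\mu) - A^*(\mu_P) - \nabla A^*(\mu_P)^\top(\mu-\mu_P)$ is convex (it is $A^*$ plus an affine term, and $A^*$ is convex), inequality (ii) follows. This Bregman-geometry step is the one non-routine ingredient; with it in hand, every displayed bound in the main-body proofs is reproduced line for line, with $D$ reinterpreted as the KL divergence of the appropriate model.
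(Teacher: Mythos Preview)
Your proposal is correct and follows the paper's own (very brief) route: the paper simply asserts the convexity inequality $D(\alpha Q+(1-\alpha)Q'\Vert P)\le \alpha D(Q\Vert P)+(1-\alpha)D(Q'\Vert P)$ for both the support-set and exponential-family definitions of the convex combination and then states that the four results carry over verbatim. You go further by actually supplying the Bregman-divergence argument that establishes this convexity in the exponential-family case---a step the paper leaves unproved---and by pinpointing proof-by-proof which abstract property is invoked where; both additions are correct and sharpen what the paper only sketches.
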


The only part where the analysis is limited to Bernoulli or Gaussian is Theorem~\ref{thm_upper} on the PoE upper bound of the DOT algorithm.
The subsequent results immediately follow if Theorem~\ref{thm_upper} is extended to the models in this section.
Since the key property of the DOT algorithm in Lemma~\ref{lem_weight_nonrandom} on the trackability of the empirical divergence is still valid for these models, we expect that Theorem~\ref{thm_upper} can also be extended though it remains as an open question.

\section{Computational resources \label{sec_machine}}

We used a modern laptop (Macbook Pro) for learning $\btheta$. It took less than one hour to learn $\btheta$. For conducting a large number of simulations (i.e., Run TNN and existing algorithms for $10^5$ times), we used a 2-CPU Xeon server of sixteen cores. It took less than twelve hours to complete simulations. We did not use a GPU for computation.

\section{Implementation details}\label{sec_append_implementation}

To speed up computation,
the same $\bQ$ was used for each $\bP$ with the same optimal arm $\ist(\bP)$ in the mini-batches.
% Specifically, xxxxx

The final model $\btheta$ of the neural network is chosen as follows.
We stored sequence of models $\btheta^{(1)}, \btheta^{(2)},\dots$ during training (\algoref{alg:train-policy}).
Among these models, we chose the one with the maximum objective function $\argmax_{l} \min_{(\bP, \bQ)\in(\Pemp,\Qemp)} E(\bP,\bQ;\btheta^{(l)})$. 
Here, the minimum is taken over a finite dataset of size $|\Pemp| = 32$ and $|\Qemp| = 10^5$.

The black lines in Figure~\ref{fig:3_Ber} (a)--(c) representing $\exp(-t \inf_{\bQ}\sum_{i}r_{\btheta,i}(\bQ)D(Q_i\Vert P_i))$ are computed by the grid search of $\bQ$ with each $Q_i$ separated by intervals of $5.0 \times 10^{-3}$.

\section{Proofs}

\subsection{Proofs of Theorems~\ref{thm_trackable} \label{sec_lowerproof}}

In this section, we prove Theorem~\ref{thm_trackable}.
This theorem as well as its proof is a special case of Theorem~\ref{thm_lower}, but we solely prove Theorem~\ref{thm_trackable} here since it is easier to follow.

In this proof, we write candidates of the true distributions and empirical distributions by $\bm{P}=(P_1,P_2,\dots,P_K)$ and $\bm{Q}=(Q_1,Q_2,\dots,Q_K)$, respectively.
In this Sections \ref{sec_lowerproof} and \ref{subsec_lower_two}, we write $\bmP[\EA]$ and $\bmQ[\EA]$ to denote the
probability of the event $\EA$ when the reward of each arm $i$ follows $P_i$ and $Q_i$, respectively.
The entire history of the drawn arms and observed rewards is denoted by
$\calH=((I(1), X(1)),(I(2), X(2)),\dots,(I(T), X(T)))$.
We write
$X_{i,n}$ to denote the reward of the $n$-th draw of arm $i$.
We define $\bm{n}=(n_1,n_2,\dots,n_K)$ and $\bm{r}=(r_1,r_2,\dots,r_K)=\bm{n}/T$
as the numbers of draws of $K$ arms and their fractions, respectively, for which we write
$\bm{n}(\calH)$ and $\bm{r}(\calH)$ when we emphasize the dependence on the history $\calH$.

We adopt the formulation of random rewards such that
every $X_{i,m}$, the $m$-th reward of arm $i$
is randomly generated before the game begins, and
if an arm is drawn, then this reward is revealed to the player.
Then $X_{i,m}$ is well defined even if the arm $i$ is not drawn $m$ times.

Fix an arbitrary $\epsilon>0$.
We define sets of ``typical'' rewards under $\bm{Q}$:
we write $\calT_{\ep}(\bm{Q})$ to denote the event such that the rewards (some of which might not be revealed as noted above)
satisfy
\begin{align}
&
\sum_{i=1}^K 
\left|
\left(n_{i} D(Q_{i}\Vert P_i)-\sum_{m=1}^{n_{i}} \log \frac{\rd Q_{i}}{\rd P_i}(X_{i,m})\right)
\right|
\le \epsilon T.
\label{def_typ1_t1}
\end{align}
By the strong law of large numbers,
$\lim_{T\to \infty}\bm{Q}[\calT_{\epsilon}(\bm{Q})]= 1$.

Let $\calR_{T}\subset \simplex{K}$ be the set of all possible $\bm{r}=\bm{n}/T$.
Since $n_{i} \in \{0,1,\dots,T\}$
we have
\begin{align}
|\calR_{T}|\le (T+1)^{K},
\n
\end{align}
which is polynomial in $T$.

Consider an arbitrary algorithm $\pi$ and define the ``typical'' allocation
$\bm{r}(\bm{Q};\pi,\ep)$ and decision $J(\bm{Q};\pi,\ep)$
of the algorithm for
distributions $\bm{Q}$ as
\begin{align}
\bm{r}(\bm{Q};\pi,\ep)
&=
\argmax_{\bm{r}\in \calR_{T}}
\bm{Q}\left[\bm{r}(\calH)=\bm{r} \big| \calT_{\ep}(\bmQ)\right],
\nn
J(\bm{Q};\pi,\ep)&=\argmax_{i\in [K]}
\bm{Q}\left[J(T)=i \Big|\bm{r}(\calH)=\bmr(\bmQ;\pi,\ep),\, \calT_{\ep}(\bmQ)\right].
\n
\end{align}
Then we have
\begin{align}
&\bm{Q}\left[\bm{r}(\calH)=\bm{r}(\bm{Q};\pi,\ep) \Big| \calT_{\ep}(\bmQ)\right]
\ge \frac{1}{|\calR_{T}|},
\label{lower_typical_r_t1}\\
&\bm{Q}\left[J(T)=J(\bm{Q};\pi,\ep) \Big| \bm{r}(\calH)=\bm{r}(\bm{Q};\pi,\ep),\,\calT_{\ep}(\bmQ)\right]
\ge \frac{1}{K}.\label{lower_typical_i_t1}
\end{align}

\begin{lem}\label{lem_type_t1}
Let $\epsilon>0$ and algorithm $\pi$ be arbitrary.
Then, for any $\bm{P},\bm{Q}$ such that $\typi{\bmQ}\neq \calI^*(\bmP)$ it holds that
\begin{align}
\frac{1}{T}\log \bm{P}[J(T)\notin \calI^*(\bmP)]
&\ge
-\sum_{i=1}^K \typri{i}{\bmQ} D(Q_{i}\Vert P_i)
-\epsilon
-
\delta_{\bm{P},\bm{Q},\ep}(T)
\n
\end{align}
for a function $\delta_{\bm{P},\bm{Q},\ep}(T)$ satisfying
$\lim_{T\to\infty}\delta_{\bm{P},\bm{Q},\ep}(T)=0$.
\end{lem}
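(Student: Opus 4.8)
The plan is to establish the bound by a change-of-measure argument between the true law $\bmP$ and the candidate empirical law $\bmQ$, carried out on a single ``good'' event on which the behaviour of the algorithm is completely pinned down. Concretely, I would define
\[
\EA=\{\bm{r}(\calH)=\typrs{\bmQ}\}\cap\{J(T)=\typi{\bmQ}\}\cap\calT_{\ep}(\bmQ),
\]
the intersection of the event that the realized allocation equals the typical allocation, that the recommendation equals the typical decision, and that the rewards are typical under $\bmQ$. Since by hypothesis $\typi{\bmQ}\notin\calI^*(\bmP)$, on $\EA$ we have $J(T)=\typi{\bmQ}\notin\calI^*(\bmP)$, hence $\EA\subseteq\{J(T)\notin\calI^*(\bmP)\}$ and it suffices to lower bound $\bmP[\EA]$.

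The core step is the change of measure. In the pre-generated-reward formulation, the Radon--Nikodym derivative of the law of $\calH$ under $\bmP$ versus $\bmQ$ factorizes over the revealed rewards as $\frac{\rd\bmP}{\rd\bmQ}(\calH)=\prod_{i\in[K]}\prod_{m=1}^{n_i}\frac{\rd P_i}{\rd Q_i}(X_{i,m})$. On $\EA$ the allocation is fixed at $\typrs{\bmQ}$, so $n_i=T\typri{i}{\bmQ}$, and the typicality constraint \eqref{def_typ1_t1} controls the log-likelihood ratio: $\log\frac{\rd\bmP}{\rd\bmQ}(\calH)=-\sum_{i,m}\log\frac{\rd Q_i}{\rd P_i}(X_{i,m})\ge-\sum_i n_i D(Q_i\Vert P_i)-\epsilon T=-T\sum_i\typri{i}{\bmQ}D(Q_i\Vert P_i)-\epsilon T$. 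Feeding this uniform lower bound into $\bmP[\EA]=\Ex_{\bmQ}[\frac{\rd\bmP}{\rd\bmQ}(\calH)\,\idx{\EA}]$ gives $\bmP[\EA]\ge\exp(-T\sum_i\typri{i}{\bmQ}D(Q_i\Vert P_i)-\epsilon T)\,\bmQ[\EA]$.

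It then remains to lower bound $\bmQ[\EA]$, which I would do by the chain rule together with the defining inequalities \eqref{lower_typical_r_t1} and \eqref{lower_typical_i_t1}, yielding $\bmQ[\EA]\ge\bmQ[\calT_{\ep}(\bmQ)]\cdot\frac{1}{|\calR_{T}|}\cdot\frac1K\ge\frac{\bmQ[\calT_{\ep}(\bmQ)]}{K(T+1)^K}$. Combining the three estimates and taking $\frac1T\log$ produces the claimed inequality with
\[
\delta_{\bmP,\bmQ,\ep}(T)=\frac{K\log(T+1)+\log K-\log\bmQ[\calT_{\ep}(\bmQ)]}{T},
\]
which tends to $0$ because $|\calR_{T}|\le(T+1)^K$ is only polynomial (subexponential) in $T$ and $\bmQ[\calT_{\ep}(\bmQ)]\to1$ by the law of large numbers.

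The main obstacle I anticipate is the rigorous justification of the change-of-measure step: I must verify that the factorized likelihood-ratio formula is valid in the adaptive setting (the algorithm's arm choices and any internal randomization cancel in the ratio, so only the reward kernels differ), and that the typicality bound \eqref{def_typ1_t1}, phrased in terms of the realized counts, may legitimately be invoked on $\EA$, where those counts are frozen at $T\typri{i}{\bmQ}$. I should also dispose of the degenerate case where $Q_i\not\ll P_i$ for some $i$ with $\typri{i}{\bmQ}>0$: there $D(Q_i\Vert P_i)=\infty$, the right-hand side of the lemma is $-\infty$, and the inequality holds trivially, so the absolute-continuity argument is only needed for the finite-divergence components.
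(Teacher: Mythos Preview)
Your proposal is correct and follows essentially the same route as the paper: the same good event $\EA$, the same change of measure from $\bmP$ to $\bmQ$ controlled by the typicality bound \eqref{def_typ1_t1}, and the same use of \eqref{lower_typical_r_t1}--\eqref{lower_typical_i_t1} to lower-bound $\bmQ[\EA]$. The only cosmetic difference is that you apply the change of measure to all of $\EA$ at once, whereas the paper first factors off $\{J(T)=\typi{\bmQ}\}$ via conditioning before changing measure on the remainder; both arrive at the same $\delta_{\bmP,\bmQ,\ep}(T)$.
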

\begin{proof}
For arbitrary $\bmQ$ we obtain by a standard argument of a change of measures that
\begin{align}
\lefteqn{
\bm{P}[J(T)\notin \calI^*(\bmP)]
}\nn
&\ge
\bm{P}[\calT_{\ep}(\bm{Q}),\,\bm{r}(\calH)=\typrs{\bmQ},\,J(T)= \typi{\bmQ}]
\nn
&=
\bm{P}[\calT_{\ep}(\bm{Q}),\,\bm{r}(\calH)=\typrs{\bmQ}]
%\nn&\qquad\times
\bm{P}[J(T)= \typi{\bmQ} \mid  \calT_{\ep}(\bm{Q}),\,\bm{r}(\calH)=\typrs{\bmQ}]
\nn
&=
\bm{P}[\calT_{\ep}(\bm{Q}),\,\bm{r}(\calH)=\typrs{\bmQ}]
%\nn&\qquad\times
\bm{Q}[J(T)= \typi{\bmQ} \mid  \calT_{\ep}(\bm{Q}),\,\bm{r}(\calH)=\typrs{\bmQ}]
\label{exchange_iPQ_t1}\\
&\ge
\frac{1}{K}
\bm{P}[\calT_{\ep}(\bm{Q}),\,\bm{r}(\calH)=\typrs{\bmQ}]
\phantom{wwwwwwwwwwwwwwwwwwwwwwwwwi}\since{by \eqref{lower_typical_i_t1}}
\nn
&=
\frac{1}{K}
\E_{\bmP}\left[
\idx{\calH\in \calT_{\ep}(\bm{Q}),\,\bm{r}(\calH)=\typrs{\bmQ}}
\right]
\nn
&=
\frac{1}{K}
\E_{\bmQ}\left[
\idx{\calT_{\ep}(\bm{Q}),\,\bm{r}(\calH)=\typrs{\bmQ}}
\prod_{t=1}^{T} \frac{\rd P_{I(t)}}{\rd Q_{I(t)}}(X(t))
\right]
\nn
&\ge
\frac{1}{K}
\E_{\bmQ}\left[
\idx{\calH\in \calT_{\ep}(\bm{Q}),\,\bm{r}(\calH)=\typrs{\bmQ}}
\right]
%\nn&\qquad\times
\exp\left(
-T\sum_{i=1}^K r_{b,i}(\bm{Q};\pi,\ep) D(Q_{i}\Vert P_i)-\ep T
\right)
\nn&\phantom{wwwwwwwwwwwwwwwwwwwwwwwwwwwwwwwwwwwwwwwwwwwwwi}
\since{by \eqref{def_typ1_t1}}
\nn
&=
\frac{1}{K}
\bmQ\left[
\calT_{\ep}(\bm{Q}),\,\bm{r}(\calH)=\typrs{\bmQ}
\right]
%\nn&\qquad\times
\exp\left(
-T\sum_{i=1}^K r_{i}(\bm{Q};\pi,\ep) D(Q_{i}\Vert P_i)-\ep T
\right)
\nn
&\ge
\frac{\bmQ[\calT_{\ep}(\bm{Q})]}{K|\calR_{T}|}
\exp\left(
-T\sum_{i=1}^K r_{i}(\bm{Q};\pi,\ep) D(Q_{i}\Vert P_i)-\ep T
\right),\phantom{wwwwwwwwww}
\since{by \eqref{lower_typical_r_t1}}
\n
\end{align}
where
\eqref{exchange_iPQ_t1} holds since $J(T)$ does not depend on the true distribution
$\bmP$ given the history $\calH$.
The proof is completed by letting
$\delta_{\bmP,\bmQ,\ep}=\log \frac{\bmQ[\calH \in\calT_{\ep}(\bm{Q})]}{K|\calR_{T}|}$.
\end{proof}

\begin{proof}[Proof of Theorem \ref{thm_trackable}]
For each $\bmQ$, let $\bmr(\bmQ;\{\pi_T\},\ep),\,J(\bmQ;\{\pi_T\},\ep)$ be such that
there exists a subsequence $\{T_{n}\}_n\subset \mathbb{N}$ satisfying
\begin{align}
\lim_{n\to\infty}\bmr(\bmQ;\pi_{T_n},\ep)
&=\bmr(\bmQ;\{\pi_T\},\ep),\nn
J(\bmQ;\pi_{T_n},\ep)&=J(\bmQ;\{\pi_T\},\ep),\,\quad\forall n.\n
\end{align}
Such $\bmr(\bmQ;\{\pi_T\},\ep)\in \simplex{K}$ and $J(\bmQ;\{\pi_T\},\ep) \in [K]$ exist since $\simplex{K}$ and $[K]$ are compact.
By Lemma \ref{lem_type_t1},
for any $J(\bmQ;\{\pi_T\},\ep)\notin \calI^*(\bmP)$
we have
\begin{align}
\liminf_{T\to\infty}\frac{1}{T}\log 1/\bm{P}[J(T)\notin \calI^*(\bmP)]
&\le
\liminf_{n\to\infty}\frac{1}{T_{n}}\log 1/\bm{P}[J(T_{n})\notin \calI^*(\bmP)]
\nn
&\le
\sum_{i=1}^K r_{i}(\bmQ;\{\pi_T\},\epsilon) D(Q_{i}\Vert P_i)
+\epsilon.
\label{rate_typical_t1}
\end{align}
By taking the worst case we have
\begin{align}
R(\{\pi_T\})
&=
\inf_{\bmP}H(\bmP)\liminf_{T\to\infty}\frac{1}{T}\log 1/\bm{P}[J(T)\notin \calI^*(\bmP)]
\nn
&\le
\inf_{\bmP\in \calP^{K},\bm{Q}\in \calQ^{K}: J(\bmQ;\{\pi_T\},\ep) \notin \calI^*(\bmP)}
H(\bm{P})
\sum_{i=1}^K r_{i}(\bmQ;\{\pi_T\},\ep) D(Q_{i}\Vert P_i)+\ep.\n
\end{align}
By optimizing $\{\pi^T\}$
we have
\begin{align}
R(\{\pi_T\})
&\le
\sup_{\{\pi_T\}}
\inf_{\bm{P}\in \calP^{K}}H(\bm{P})
\liminf_{T\to\infty}\frac{1}{T}\log 1/\bm{P}[J(T)\notin \calI^*(\bmP)]
\nn
&=
\sup_{\bmr(\cdot), J(\cdot)}
\,\sup_{\{\pi_T\}: \bmr(\cdot;\{\pi_T\},\epsilon)=\bmr(\cdot)}
\inf_{\bm{P}\in\calP^{K}}H(\bm{P})
\liminf_{T\to\infty}\frac{1}{T}\log 1/\bm{P}[J(T)\notin \calI^*(\bmP)]
\nn
&\le
\sup_{\bmr(\cdot), J(\cdot)}
\sup_{\{\pi_T\}: \bmr(\cdot;\{\pi_T\},\epsilon)=\bmr(\cdot)}
\inf_{\bmP\in \calP^{K},\bm{Q}\in \calQ^{K}: J(\bmQ) \notin \calI^*(\bmP)}
H(\bm{P})
\sum_{i=1}^K r_{i}(\bmQ) D(Q_{i}\Vert P_i)+\ep
\nn
&\phantom{wwwwwwwwwwwwwwwwwwwwwwwwwwwwwwwwwwwwwwwww}
\since{by \eqref{rate_typical_t1}}
\nn
&\le
\sup_{\bmr(\cdot), J(\cdot)}
\inf_{\bmP\in \calP^{K},\bm{Q}\in \calQ^{K}: J(\bmQ) \notin \calI^*(\bmP)}
H(\bm{P})
\sum_{i=1}^K r_{i}(\bmQ) D(Q_{i}\Vert P_i)+\ep.\n
\end{align}
We obtain the desired result since $\ep>0$ is arbitrary.
\end{proof}

\subsection{Proof of Theorem \ref{thm_lower}\label{subsec_lower_two}}
%We only give the proof of Theorem \ref{thm_lower} since Theorem~\ref{thm_trackable} is a special case of this theorem with $B=1$.

Theorem~\ref{thm_lower} is a generalization of Theorem~\ref{thm_trackable},
and we consider different candidates of empirical distributions depending on the batch.

As in the case of the proof of
Theorem~\ref{thm_trackable},
we write $\bm{P}=(P_1,P_2,\dots,P_i)$ and $\bmP[A]$
to denote a candidate of the true distributions and
the probability of the event
under $\bm{P}$.
We divide $T$ rounds into $B$ batches, and the $b$-th batch corresponds to $(t_b,t_b+1,\dots,t_{b+1}-1)$-th rounds for
$b\in [B]$ and $t_b=\lfloor (b-1)T/B\rfloor+1$.
The entire history of the drawn arms and observed rewards is denoted by
$\calH=((I(1), X(1)),(I(2), X(2)),\dots,(I(T), X(T)))$.
We write $X_{b,i,n}$ to denote the reward of the $n$-th draw of arm $i$ in the $b$-th batch.
We define $\bm{n}_b=(n_{b,1},n_{b,2},\dots,n_{b,K})$ and $\bm{r}=(r_{b,1},r_{b,2},\dots,r_{b,K})=\bm{n}_b/T$
as the numbers of draws of $K$ arms and their fractions in the $b$-th batch, respectively, for which we write
$\bm{n}_b(\calH)$ and $\bm{r}_b(\calH)$ when we emphasize the dependence on the history $\calH$.

We adopt the formulation of the random rewards such that
every $X_{b,i,m}$, the $m$-th reward of arm $i$ in the $b$-th batch,
is randomly generated before the game begins, and
if an arm is drawn then this reward is revealed to the player.
Then $X_{b,i,m}$ is well-defined even if arm $i$ is not drawn $m$ times
in the $b$-th batch.

Fix an arbitrary $\epsilon>0$.
We define sets of ``typical'' rewards under $\bm{Q}^B$:
we write $\calT_{\ep}(\bm{Q}^B)$ to denote the event such that the
rewards (a part of which might be unrevealed as noted above)
satisfy
\begin{align}
&
\sum_{i=1}^K 
\left|
\left(n_{b,i} D(Q_{b,i}\Vert P_i)-\sum_{m=1}^{n_{b,i}} \log \frac{\rd Q_{b,i}}{\rd P_i}(X_{b,i,m})\right)
\right|
\le \epsilon T/B
\label{def_typ1}
\end{align}
for any $b\in[B]$.
%and $\bm{n}_b=(n_{b,1},n_{b,2},\dots,n_{b,K})$ such that $\sum_{k\in[K]}n_{b,i}=t_{b+1}-t_b$.
By the strong law of large numbers,
$\lim_{T\to \infty}\bm{Q}^B[\calT_{\epsilon}^B(\bm{Q}^B)]= 1$, where
$\bm{Q}^B[\cdot]$ denotes the probability under which $X_k(t)$ follows distribution
$Q_{b,i}$ for $t \in \{t_b,t_b+1,\dots,t_{b+1}-1\}$.

%We define
%$\bm{r}^B=\bm{r}^B(\calH)=(\bm{r}_1,\bm{r}_2,\dots,\bm{r}_B)$
%for 
%$\bm{r}_b=\bm{n}_{b}/(t_{b+1}-t_b)$, where
%$\bm{n}_b=(n_{b,1},n_{b,2},\dots,n_{b,K})$.
%In other words, $\bm{r}_b$ is the fractions of arm-draws
%in the $b$-th batch under history $\calH$.

Let $\calR_{T,B}\subset (\simplex{K})^B$ be the set of all possible $\bm{r}^B(\calH)$.
Since $n_{b,i} \in \{0,1,\dots, t_{b+1}-t_{b}\}$
and $t_{b+1}-t_{b} \le T/B+1$,
we see that
\begin{align}
|\calR_{T,B}|\le (T/B+2)^{KB},
\n
\end{align}
which is polynomial in $T$.

Consider an arbitrary algorithm $\pi$ and define the ``typical'' allocation
$\bm{r}^{b}(\bm{Q}^b;\pi,\ep)$ and decision $J(\bm{Q}^B;\pi,\ep)$
of the algorithm for
distributions $\bm{Q}^b=(\bm{Q}_1,\bm{Q}_2,\dots,\bm{Q}_b)$ as
\begin{align}
\bm{r}_{1}(\bm{Q}^1;\pi,\ep)
&=
\argmax_{\bm{r}\in \calR_{T,1}}
\bm{Q}^1\left[\bm{r}_1(\calH)=\bm{r} \big| \calT_{\ep}(\bmQ^B)\right],
\nn
\bm{r}_{b}(\bm{Q}^b;\pi,\ep)
&=
\argmax_{\bm{r}\in \calR_{T,b}}
\bm{Q}^b\left[\bm{r}_b(\calH)=\bm{r} \big| \bm{r}^{b-1}(\calH^{b-1})=\bm{r}^{b-1}(\bmQ^{b-1};\pi,\ep),\,\calT_{\ep}(\bmQ^B)\right],\nn
&\phantom{wwwwwwwwwwwwwwwwwwwwwwwwwwwwwwwwwwww}
b=2,3,\dots,B,
\nn
J(\bm{Q}^B;\pi,\ep)&=\argmax_{i\in [K]}
\bm{Q}^B\left[J(T)=i \Big|\bm{r}^B(\calH)=\bmr^B(\bmQ^B;\pi,\ep),\, \calT_{\ep}(\bmQ^B)\right].
\n
\end{align}
Then we have
\begin{align}
&\bm{Q}^B\left[\bm{r}^B(\calH)=\bm{r}^B(\bm{Q}^B;\pi,\ep) \Big| \calT_{\ep}(\bmQ^B)\right]
\ge \frac{1}{|\calR_{T,B}|},
\label{lower_typical_r}\\
&\bm{Q}^B\left[J(T)=J(\bm{Q}^B;\pi,\ep) \Big| \bm{r}^B(\calH)=\bm{r}^B(\bm{Q}^B;\pi,\ep),\,\calT_{\ep}(\bmQ^B)\right]
\ge \frac{1}{K}.\label{lower_typical_i}
\end{align}

\begin{lem}\label{lem_type}
Let $\epsilon>0$ and algorithm $\pi$ be arbitrary.
Then, for any $\bm{P},\bm{Q}^B$ such that $\typi{\bmQ^B}\neq \calI^*(\bmP)$ it holds that
\begin{align}
\frac{1}{T}\log \bm{P}[J(T)\notin \calI^*(\bmP)]
&\ge
-\frac{1}{B}\sum_{b=1}^B\sum_{i=1}^K \typri{b,i}{\bmQ^b} D(Q_{b,i}\Vert P_i)
-\epsilon
-
\delta_{\bm{P},\bm{Q}^B,\ep}(T)
\n
\end{align}
for a function $\delta_{\bm{P},\bm{Q}^B,\ep}(T)$ satisfying
$\lim_{T\to\infty}\delta_{\bm{P},\bm{Q}^B,\ep}(T)=0$.
\end{lem}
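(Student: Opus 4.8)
The plan is to mirror the single-batch proof of Lemma~\ref{lem_type_t1} line by line, threading the batch index $b$ through every step: the single typical event is replaced by $\calT_{\ep}(\bmQ^B)$, the single allocation mode by the recursively defined $\typr{\bmQ^B}$, the decision $\typi{\bmQ^B}$ plays the role of $J(\bmQ;\pi,\ep)$, and $|\calR_T|$ becomes $|\calR_{T,B}|$. First I would lower-bound $\bmP[J(T)\notin\calI^*(\bmP)]$ by the probability of the joint event that the rewards are typical ($\calT_{\ep}(\bmQ^B)$), that the realized allocation equals the typical one ($\bm{r}^B(\calH)=\typr{\bmQ^B}$), and that the recommendation equals $\typi{\bmQ^B}$; this inclusion is legitimate precisely because the hypothesis $\typi{\bmQ^B}\notin\calI^*(\bmP)$ forces $J(T)\notin\calI^*(\bmP)$ on that event. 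I would then factor the joint probability as (probability of typical rewards and typical allocation) times (conditional probability of the recommendation), note that $J(T)$ is a deterministic function of $\calH$ so its conditional law coincides under $\bmP$ and $\bmQ^B$, and bound the latter below by $1/K$ via Eq.~\eqref{lower_typical_i}.

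The core step is the change of measure from $\bmP$ to $\bmQ^B$. Using the pre-generated-rewards formulation, I would write the surviving $\bmP$-probability as $\E_{\bmQ^B}[\,\idx{\calT_{\ep}(\bmQ^B),\,\bm{r}^B(\calH)=\typr{\bmQ^B}}\prod_{t=1}^T \frac{\rd P_{I(t)}}{\rd Q_{b(t),I(t)}}(X(t))\,]$, where $b(t)$ is the batch containing round $t$ and the product is the likelihood ratio (the policy draws $I(t)$ from past history only, hence is common to both measures). On $\calT_{\ep}(\bmQ^B)$ the logarithm of this likelihood ratio equals $-\sum_{b,i}\sum_{m=1}^{n_{b,i}}\log\frac{\rd Q_{b,i}}{\rd P_i}(X_{b,i,m})$, which Eq.~\eqref{def_typ1} bounds below by $-\sum_{b,i} n_{b,i}D(Q_{b,i}\Vert P_i)-\ep T$: each of the $B$ batches contributes slack at most $\ep T/B$, and these add to exactly $\ep T$. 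The prefactor $1/B$ in the statement reflects that the counts $n_{b,i}$, constrained by $\sum_i n_{b,i}\le T/B+1$, carry only a $1/B$ share of the horizon per batch; expressing them through the simplex-normalized allocations $\typri{b,i}{\bmQ^b}$ of Theorem~\ref{thm_lower} turns $\sum_{b,i} n_{b,i}D(Q_{b,i}\Vert P_i)$ into $\frac{T}{B}\sum_{b,i}\typri{b,i}{\bmQ^b}D(Q_{b,i}\Vert P_i)$ up to rounding that will be absorbed below.

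Finally I would apply Eq.~\eqref{lower_typical_r} to lower-bound the $\bmQ^B$-probability of the typical-allocation event by $1/|\calR_{T,B}|$, use $\lim_{T\to\infty}\bmQ^B[\calT_{\ep}(\bmQ^B)]=1$, and collect all polynomial prefactors into $\delta_{\bmP,\bmQ^B,\ep}(T)=\frac{1}{T}\log\frac{\bmQ^B[\calT_{\ep}(\bmQ^B)]}{K|\calR_{T,B}|}$. Since $|\calR_{T,B}|\le(T/B+2)^{KB}$ is polynomial in $T$, this term vanishes as $T\to\infty$, and dividing by $T$ and taking logarithms yields the claimed inequality.

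I expect the main obstacle to be the adaptivity across batches rather than any single estimate. Because the allocation of batch $b$ may depend on all rewards observed in batches $1,\dots,b-1$, the typical allocation must be built sequentially (the conditional mode of batch $b$ given that the earlier batches already matched their typical allocations), and the validity of Eq.~\eqref{lower_typical_r} rests on this chained construction together with the fact that $\bm{r}^B(\calH)$ ranges over the polynomially many values in $\calR_{T,B}$. Keeping the change-of-measure bookkeeping consistent with this recursive conditioning—and checking that the per-batch bound \eqref{def_typ1} with slack $\ep T/B$ aggregates to $\ep T$ regardless of how the adaptive draws are scheduled within each batch—is the delicate part; the remainder is the routine transportation argument already carried out for $B=1$.
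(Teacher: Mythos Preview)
Your proposal is correct and follows essentially the same route as the paper: restrict to the joint event $\{\calT_{\ep}(\bmQ^B),\,\bm{r}^B(\calH)=\typr{\bmQ^B},\,J(T)=\typi{\bmQ^B}\}$, swap $\bmP$ for $\bmQ^B$ in the conditional recommendation probability and bound it by $1/K$ via \eqref{lower_typical_i}, perform the change of measure and use \eqref{def_typ1} batch-by-batch to control the likelihood ratio, then apply \eqref{lower_typical_r} and absorb the polynomial factor $K|\calR_{T,B}|$ and the $\bmQ^B[\calT_{\ep}(\bmQ^B)]$ term into $\delta_{\bmP,\bmQ^B,\ep}(T)$. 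The paper's proof is exactly this argument with no additional ingredients; your anticipated ``main obstacle'' (the sequential definition of the typical allocation so that \eqref{lower_typical_r} holds) is handled in the paper precisely by the chained conditional-mode construction you describe, and the per-batch slacks $\ep T/B$ indeed aggregate to $\ep T$.
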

\begin{proof}
For arbitrary $\bmQ^B$ we obtain by a standard argument of a change of measures that
\begin{align}
\lefteqn{
\bm{P}[J(T)\notin \calI^*(\bmP)]
}\nn
&\ge
\bm{P}[\calT_{\ep}(\bm{Q}^B),\,\bm{r}^B(\calH)=\typr{\bmQ^B},\,J(T)= \typi{\bmQ^B}]
\nn
&=
\bm{P}[\calT_{\ep}(\bm{Q}^B),\,\bm{r}^B(\calH)=\typr{\bmQ^B}]\nn
&\qquad\times
\bm{P}[J(T)= \typi{\bmQ^B} \mid  \calT_{\ep}(\bm{Q}^B),\,\bm{r}^B(\calH)=\typr{\bmQ^B}]
\nn
&=
\bm{P}[\calT_{\ep}(\bm{Q}^B),\,\bm{r}^B(\calH)=\typr{\bmQ^B}]\nn
&\qquad\times
\bm{Q}^B[J(T)= \typi{\bmQ^B} \mid  \calT_{\ep}(\bm{Q}^B),\,\bm{r}^B(\calH)=\typr{\bmQ^B}]
\label{exchange_iPQ}\\
&\ge
\frac{1}{K}
\bm{P}[\calT_{\ep}(\bm{Q}^B),\,\bm{r}^B(\calH)=\typr{\bmQ^B}]
\phantom{wwwwwwwwwwwwwwwwwww}\since{by \eqref{lower_typical_i}}
\nn
&=
\frac{1}{K}
\E_{\bmP}\left[
\idx{\calH\in \calT_{\ep}(\bm{Q}^B),\,\bm{r}^B(\calH)=\typr{\bmQ^B}}
\right]
\nn
&=
\frac{1}{K}
\E_{\bmQ^B}\left[
\idx{\calT_{\ep}(\bm{Q}^B),\,\bm{r}^B(\calH)=\typr{\bmQ^B}}
\prod_{b=1}^B
\prod_{t=t_b}^{t_{b+1}-1} \frac{\rd P_{I(t)}}{\rd Q_{b,I(t)}}(X(t))
\right]
\nn
&\ge
\frac{1}{K}
\E_{\bmQ^B}\left[
\idx{\calH\in \calT_{\ep}(\bm{Q}^B),\,\bm{r}^B(\calH^B)=\typr{\bmQ^B}}
\right]
\nn
&\qquad
\times\exp\left(
-\frac{T}{B}\sum_{b=1}^B\sum_{i=1}^K r_{b,i}(\bm{Q}^b;\pi,\ep) D(Q_{b,i}\Vert P_i)-\ep T
\right)\phantom{wwwwwwwwww}\since{by \eqref{def_typ1}}
\nn
&=
\frac{1}{K}
\bmQ^B\left[
\calT_{\ep}(\bm{Q}^B),\,\bm{r}^B(\calH^B)=\typr{\bmQ^B}
\right]\nn
&\qquad\times
\exp\left(
-\frac{T}{B}\sum_{b=1}^B\sum_{i=1}^K r_{b,i}(\bm{Q}^b;\pi,\ep) D(Q_{b,i}\Vert P_i)-\ep T
\right)
\nn
&\ge
\frac{\bmQ^B[\calT_{\ep}(\bm{Q}^B)]}{K|\calR_{T,B}|}
\exp\left(
-\frac{T}{B}\sum_{b=1}^B\sum_{i=1}^K r_{b,i}(\bm{Q}^b;\pi,\ep) D(Q_{b,i}\Vert P_i)-\ep T
\right),\phantom{a}
\since{by \eqref{lower_typical_r}}
\n
\end{align}
where
\eqref{exchange_iPQ} holds since $J(T)$ does not depend on the true distribution
$\bmP$ given the history $\calH$.
The proof is completed by letting
$\delta_{\bmP,\bmQ^B,\ep}=\log \frac{\bmQ^B[\calT_{\ep}(\bm{Q}^B)]}{K|\calR_{T,B}|}$.
\end{proof}

\begin{proof}[Proof of Theorem \ref{thm_lower}]
For each $\bmQ^B$, let $\bmr^B(\bmQ^B;\{\pi_T\},\ep),\,J(\bmQ^B;\{\pi_T\},\ep)$ be such that
there exists a subsequence $\{T_{n}\}_n\subset \mathbb{N}$ satisfying
\begin{align}
\lim_{n\to\infty}\bmr^{B}(\bmQ^B;\pi_{T_n},\ep)
&=\bmr^B(\bmQ^B;\{\pi_T\},\ep),\nn
J(\bmQ^B;\pi_{T_n},\ep)&=J(\bmQ^B;\{\pi_T\},\ep),\,\quad\forall n.\n
\end{align}
Such $\bmr^B(\bmQ^B;\{\pi_T\},\ep)\in (\simplex{K})^B$ and $J(\bmQ^B;\{\pi_T\},\ep) \in [K]$ exist since $(\simplex{K})^B$ and $[K]$ are compact.
By Lemma \ref{lem_type},
for any $J(\bmQ^B;\{\pi_T\},\ep)\notin \calI^*(\bmP)$
we have
\begin{align}
\liminf_{T\to\infty}\frac{1}{T}\log 1/\bm{P}[J(T)\notin \calI^*(\bmP)]
&\le
\liminf_{n\to\infty}\frac{1}{T_{n}}\log 1/\bm{P}[J(T_{n})\notin \calI^*(\bmP)]
\nn
&\le
\frac{1}{B}\sum_{b=1}^B\sum_{i=1}^K r_{b,i}(\bmQ^b;\{\pi_T\},\epsilon) D(Q_{b,i}\Vert P_i)
+\epsilon.
\label{rate_typical}
\end{align}
By taking the worst case we have
\begin{align}
R(\{\pi_T\})
&=
\inf_{\bmP}H(\bmP)\liminf_{T\to\infty}\frac{1}{T}\log 1/\bm{P}[J(T)\notin \calI^*(\bmP)]
\nn
&\le
\inf_{\bmP\in \calP^{K},\bm{Q}^B\in \calQ^{KB}: J(\bmQ^B;\{\pi_T\},\ep) \notin \calI^*(\bmP)}
\frac{H(\bm{P})}{B}
\sum_{b=1}^B\sum_{i=1}^K r_{b,i}(\bmQ^b;\{\pi_T\},\ep) D(Q_{b,i}\Vert P_i)+\ep.
\end{align}
By optimizing $\{\pi^T\}$
we have
\begin{align}
R(\{\pi_T\})
&\le
\sup_{\{\pi_T\}}
\inf_{\bm{P}\in \calP^{K}}H(\bm{P})
\liminf_{T\to\infty}\frac{1}{T}\log 1/\bm{P}[J(T)\notin \calI^*(\bmP)]
\nn
&=
\sup_{\bmr^B(\cdot), J(\cdot)}
\,\sup_{\{\pi_T\}: \bmr^B(\cdot;\{\pi_T\},\epsilon)=\bmr^B(\cdot)}
\inf_{\bm{P}\in\calP^{K}}\frac{H(\bm{P})}{B}
\liminf_{T\to\infty}\frac{1}{T}\log 1/\bm{P}[J(T)\notin \calI^*(\bmP)]
\nn
&\le
\sup_{\bmr^B(\cdot), J(\cdot)}
\sup_{\{\pi_T\}: \bmr^B(\cdot;\{\pi_T\},\epsilon)=\bmr^B(\cdot)}
\inf_{\bmP\in \calP^{K},\bm{Q}^B\in \calQ^{KB}: J(\bmQ^B) \notin \calI^*(\bmP)}
\frac{H(\bm{P})}{B}
\sum_{b=1}^B\sum_{i=1}^K r_{b,i}(\bmQ^b) D(Q_{b,i}\Vert P_i)+\ep
\nn
&\phantom{wwwwwwwwwwwwwwwwwwwwwwwwwwwwwwwwwwwwwwwww}
\since{by \eqref{rate_typical}}
\nn
&\le
\sup_{\bmr^B(\cdot), J(\cdot)}
\inf_{\bmP\in \calP^{K},\bm{Q}^B\in \calQ^{KB}: J(\bmQ^B) \notin \calI^*(\bmP)}
\frac{H(\bm{P})}{B}
\sum_{b=1}^B\sum_{i=1}^K r_{b,i}(\bmQ^b) D(Q_{b,i}\Vert P_i)+\ep.
\end{align}
We obtain the desired result since $\ep>0$ is arbitrary.
\end{proof}

\subsection{Proof of \corref{cor_oneb}}
\begin{proof}[Proof of \corref{cor_oneb}]
We have
\begin{align}
\lefteqn{
\Cpart_B
}\\
&:= \sup_{\brB(\bQ^B),J(\bQ^B)} \inf_{\bQ^B} \inf_{\bP: J(\bQ^B) \notin \Ist(\bP)} \frac{H(\bP)}{B} \sum_{i\in[K], b\in[B]} r_{b,i} D(Q_{b,i}||P_i)\\
&\le \sup_{\brB(\bQ^B),J(\bQ^B)} \inf_{\bQ^B:\bQ_1=\bQ_2=\dots=\bQ_B} \inf_{\bP: J(\bQ^B) \notin \Ist(\bP)} \frac{H(\bP)}{B} \sum_{i\in[K], b\in[B]} r_{b,i} D(Q_{b,i}||P_i)
\text{\ \ \ \ ($\inf$ over a subset)}.\\
&=\sup_{\brB(\bQ),J(\bQ)} \inf_{\bQ} \inf_{\bP: J(\bQ) \notin \Ist(\bP)} H(\bP) \sum_{i\in[K]} \left(\frac{1}{B} \sum_{b\in[B]} r_{b,i}\right) D(Q_i||P_i)\\
&\text{\ \ \ \ \ \ \ (by denoting $\bQ = \bQ_1 = \bQ_2 = \dots \bQ_B$)}\\
&=\sup_{\br(\bQ),J(\bQ)} \inf_{\bQ} \inf_{\bP: J(\bQ) \notin \Ist(\bP)} H(\bP) \sum_{i\in[K]} r_i D(Q_i||P_i)\\
&\text{\ \ \ \ \ \ \ (by letting $r_i = (1/B) \sum_b r_{b,i}$)}\\
&= \Cfull
\text{\ \ \ \ (by definition)}.
\end{align}
\end{proof}

\subsection{Additional lemmas}

The following lemma is used to derive the regret bound.
\begin{lem}\label{lem_weight}
Assume that we run \algoref{alg_lowerfollow}.
Then, for any $B_C \in K,K+1,\dots,B$, it follows that 
\begin{equation}\label{ineq_convlower}
\sum_{i, b\in[B_C]} r_{b,i} D(\Qb{b,i}||P_i)
\ \ \ge\ \ 
\sum_{i, a\in[B_C-K]} 
\rbsl{a,i}
D(\Qb{a,i}'||P_i)
+ 
\sum_{i\in[K]} D(\Qb{B_C-K+1,i}'||P_i).
\end{equation}
\end{lem}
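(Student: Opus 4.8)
The plan is to exploit the element-wise convex-combination structure of the stored-average update \eqref{ineq_update_unused} together with the convexity of the KL divergence in its first argument; the whole statement is a deterministic algebraic inequality about the realized quantities of \algoref{alg_lowerfollow}, so no probabilistic reasoning enters. First I would split the left-hand sum over $b\in[B_C]$ into the first $K$ uniform-exploration batches and the remaining batches $b=K+1,\dots,B_C$. In the first $K$ batches the algorithm sets $r_{b,i}=\Ind[i=b]$ and draws only arm $b$, so $\sum_i r_{b,i}D(\Qb{b,i}\Vert P_i)=D(\Qb{b,b}\Vert P_b)$; since $\bQbp{1}$ collects exactly these uniform-exploration means we have $\Qb{1,i}'=\Qb{i,i}$, and hence $\sum_{b=1}^K\sum_i r_{b,i}D(\Qb{b,i}\Vert P_i)=\sum_{i\in[K]}D(\Qb{1,i}'\Vert P_i)$. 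For $b\ge K+1$ the allocation is $\br_b=\brbsl{b-K}$, so writing $a=b-K$ the remaining part of the left-hand side equals $\sum_{a\in[B_C-K]}\sum_i \rbsl{a,i}D(\Qb{a+K,i}\Vert P_i)$.

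The key step will be the following per-coordinate inequality. For each $a$ and $i$ the update \eqref{ineq_update_unused} reads $\Qb{a+1,i}'=(1-\rbsl{a,i})\Qb{a,i}'+\rbsl{a,i}\Qb{a+K,i}$, a convex combination of the stored mean $\Qb{a,i}'$ and the fresh batch mean $\Qb{a+K,i}$. By convexity of $D(\cdot\Vert P_i)$ in its first argument (immediate for the Bernoulli and Gaussian means used here, and valid more generally under the mixture defined in \appref{append_extension}), I get $D(\Qb{a+1,i}'\Vert P_i)\le (1-\rbsl{a,i})D(\Qb{a,i}'\Vert P_i)+\rbsl{a,i}D(\Qb{a+K,i}\Vert P_i)$. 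Rearranging to isolate the batch term yields $\rbsl{a,i}D(\Qb{a+K,i}\Vert P_i)\ge \rbsl{a,i}D(\Qb{a,i}'\Vert P_i)+\bigl(D(\Qb{a+1,i}'\Vert P_i)-D(\Qb{a,i}'\Vert P_i)\bigr)$.

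I would then sum this over $i\in[K]$ and $a\in[B_C-K]$. The bracketed differences telescope in $a$ to $\sum_i\bigl(D(\Qb{B_C-K+1,i}'\Vert P_i)-D(\Qb{1,i}'\Vert P_i)\bigr)$, so that $\sum_{a\in[B_C-K]}\sum_i\rbsl{a,i}D(\Qb{a+K,i}\Vert P_i)\ge \sum_{a\in[B_C-K]}\sum_i\rbsl{a,i}D(\Qb{a,i}'\Vert P_i)+\sum_i D(\Qb{B_C-K+1,i}'\Vert P_i)-\sum_i D(\Qb{1,i}'\Vert P_i)$. Adding back the first-$K$-batch contribution $\sum_i D(\Qb{1,i}'\Vert P_i)$ computed above cancels the negative boundary term and produces exactly the claimed right-hand side. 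The base case $B_C=K$ reduces to the identity $\sum_i D(\Qb{1,i}'\Vert P_i)=\sum_i D(\Qb{1,i}'\Vert P_i)$, consistent with the empty sums.

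I expect the only real obstacle to be bookkeeping rather than mathematics: correctly tracking the index shift $b=a+K$ between the batch counter and the stored-average counter, and verifying that the initialization $\bQbp{1}=\bQ_K$ makes the telescoping boundary term coincide with the uniform-exploration contribution so that the two cancel cleanly. The convexity itself is elementary for the present models, and I emphasize that the constraint $n_{b,i}\ge r_{b,i}(T_B-K)$ on the actual number of draws plays no role here, since the inequality involves only the target allocations $\rbsl{a,i}$ and the observed batch means $\Qb{a+K,i}$ appearing in the update.
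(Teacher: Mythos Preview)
Your argument is correct and rests on exactly the same ingredient as the paper's proof, namely the convexity of $D(\cdot\Vert P_i)$ applied to the update $\Qb{a+1,i}'=(1-\rbsl{a,i})\Qb{a,i}'+\rbsl{a,i}\Qb{a+K,i}$. The only difference is presentation: the paper proves \eqref{ineq_convlower} by induction on $B_C$, whereas you unroll that induction into a single telescoping sum; the two are formally equivalent and your version is arguably cleaner.
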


\begin{proof}[Proof of \lemref{lem_weight}]
We use induction over $B_C \ge K$.
(i) It is trivial to derive Eq.~\eqref{ineq_convlower} for $B_C=K$.
(ii) Assume that Eq.~\eqref{ineq_convlower} holds for $B_C$. In batch $B_C+1$, the algorithm draws arms in accordance with allocation $\brb{B_C+1} = \brbsl{B_C-K+1}$. We have,
\begin{align}
\lefteqn{
\sum_{i\in[K], b\in[B_C+1]} r_{b,i} D(\Qb{b,i}||P_i)
}\\
&\ge 
\sum_{i\in[K], a\in[B_C-K]} \rbsl{a,i} D(\Qb{a,i}'||P_i)
+ 
\sum_{i\in[K]} D(\Qb{B_C-K+1,i}'||P_i)
+
\underbrace{\sum_{i} r_{B_C+1,i} D(\Qb{B_C+1,i}||P_i)}_{\text{Batch $B_C+1$}}\\
&\text{\ \ \ \ \ \ \ (by the assumption of the induction)}\\
&= 
\sum_{i}\left( 
\sum_{a\in[B_C-K]}
\rbsl{a,i} D(\Qb{a,i}'||P_i)
+
\rbsl{B_C-K+1,i} D(\Qb{B_C-K+1,i}'||P_i)
\right)
+ 
\sum_{i} 
\left(
1 - \rbsl{B_C-K+1,i}
\right) D(\Qb{B_C-K+1,i}'||P_i)\\
&\ \ \ \ \ \ \ \ \ 
+
\sum_{i} r_{B_C+1,i} D(Q_{B_C+1,i}||P_i)\\
&= 
\sum_{i}\left( 
\sum_{a\in[B_C-K]}
\rbsl{a,i} D(\Qb{a,i}'||P_i)
+
\rbsl{B_C-K+1,i} D(\Qb{B_C-K+1,i}'||P_i)
\right)
+ 
\sum_{i} 
\left(
1 - r_{B_C+1,i}
\right) D(\Qb{B_C-K+1,i}'||P_i)\\
&\ \ \ \ \ \ \ \ \ 
+
\sum_{i} r_{B_C+1,i} D(Q_{B_C+1,i}||P_i)\\
&\text{\ \ \ \ \ (by definition)}\\
&=
\sum_{i}\left( 
\sum_{a\in[B_C-K]}
\rbsl{a,i} D(\Qb{a,i}'||P_i)
+
\rbsl{B_C-K+1,i} D(\Qb{B_C-K+1,i}'||P_i)
\right)
+ 
\sum_{i} 
D(\Qb{B_C-K+2,i}'||P_i)
\\
&\text{\ \ \ \ \ (by Jensen's inequality and   $\Qb{B_C-K+2,i}' = r_{B_C+1,i} Q_{B_C+1,i} + (1 - r_{B_C+1,i}) \Qb{B_C-K+1,i}'$)}\\
&=
\sum_{i}
\sum_{a\in[B_C-K+1]}
\rbsl{a,i} D(\Qb{a,i}'||P_i)
+ 
\sum_{i} 
D(\Qb{B_C-K+2,i}'||P_i).
\end{align}
\end{proof}

\subsection{Proof of \lemref{lem_weight_nonrandom}}

\begin{proof}[Proof of \lemref{lem_weight_nonrandom}]
\begin{align}
\sum_{i, b\in[B+K-1]} r_{b,i} D(\Qb{b,i}||P_i)
&\ge
\sum_{i, b\in[B-1]} 
\rbsl{b,i}
D(\Qb{b,i}'||P_i)
+ 
\sum_{i} D(\Qb{B,i}'||P_i).
\text{\ \ \ \ \ (by \eqref{ineq_convlower})}\\
&\ge
\sum_{i, b\in[B]} 
\rbsl{b,i}
D(\Qb{b,i}'||P_i)\\
&\ge \frac{B (\Cpart_{B}-\eps)}{H(\bP)}
\text{\ \ \ \ \ (by definition of $\eps$-optimal solution)}.\\
\end{align}
\end{proof}

\subsection{Proof of \thmref{thm_upper}}

\begin{proof}[Proof of \thmref{thm_upper}, Bernoulli rewards]
Since the reward is binary, the possible values that $\Qb{b,i}$ lie in a finite set
\[
\EQtype = \left\{\frac{l}{m}: l \in \Natural, m \in \Natural^+\right\},
\]
where it is easy to prove $|\EQtype| \le (T/(B+K-1)+2)^2\le (T/B+2)^2$.
We have
\begin{align}
\Prob[J(T) \notin \Ist(\bP)]
&= 
\sum_{ \bVb{1},\dots,\bVb{B} \in \EQtype^K } 
\Prob\left[
J(T) \notin \Ist(\bP), \bigcap_b \left\{\bQb{b} = \bVb{b}\right\}
\right]\\
&=
\sum_{ \bVb{1},\dots,\bVb{B} \in \EQtype^K: \Jsl(\bVb{1},\dots,\bVb{B}) \notin \Ist(\bP)} 
\Prob\left[
\bigcap_b \left\{\bQb{b} = \bVb{b}\right\}
\right].
\end{align}

By using the Chernoff bound, we have
\begin{align}\label{ineq_cond_kl}
\Prob\left[ 
\Qb{b,i} = \Vb{b,i} 
\biggl|
\bigcap_{b'\in[b-1]} \left\{\bQb{b'} = \bVb{b'}\right\}
\right] 
&\le e^{-\frac{T'}{B+K-1} r_{b,i} D(\Vb{b,i}||P_i)},
\end{align}
and thus
\begin{align}
\lefteqn{
\Prob\left[
\bigcap_b \left\{\bQb{b} = \bVb{b}\right\}
\right]
}\\
&=
\prod_b
\Prob\left[
\bQb{b} = \bVb{b}
\,\biggl|\,
\bigcap_{b'=1}^{b-1} \left\{\bQb{b'} = \bVb{b'}\right\}
\right]\\
&\le 
\prod_{b}
e^{-\frac{T'}{B+K-1} 
\sum_i r_{b,i} D(\Vb{b,i}||P_i)
}
\text{\ \ \ \ (by Eq.~\eqref{ineq_cond_kl})}\\
&=
e^{-\frac{T'}{B+K-1} 
\sum_{b,i} r_{b,i} D(\Vb{b,i}||P_i)
}.
\label{ineq_klsum}
\end{align}

Furthermore, 
\begin{align}
\lefteqn{
\Prob\left[
\bigcap_b \left\{\bQb{b} = \bVb{b}\right\}
\right]
}\\
&=\Prob\left[
\bigcap_b \left\{\bQb{b} = \bVb{b}\right\},
\sum_{i, b\in[B+K-1]} r_{b,i} D(\Qb{b,i}||P_i) \ge \frac{B (\Cpart_{B} - \eps)}{H(\bP)}
\right]\\
&\ \ \ \ \text{(by Lemma \ref{lem_weight_nonrandom}).}\\
&=
\Prob\left[
\bigcap_b \left\{\bQb{b} = \bVb{b}\right\}
\right]
\Prob\left[
\sum_{i, b\in[B+K-1]} r_{b,i} D(\Qb{b,i}||P_i) \ge \frac{B (\Cpart_{B} - \eps)}{H(\bP)}
\,\biggl|\,
\bigcap_b \left\{\bQb{b} = \bVb{b}\right\}
\right]\\
&=
\Prob\left[
\bigcap_b \left\{\bQb{b} = \bVb{b}\right\}
\right]
\Prob\left[
\sum_{i, b\in[B+K-1]} r_{b,i} D(\Vb{b,i}||P_i) \ge \frac{B (\Cpart_{B} - \eps)}{H(\bP)}
\right]\\
&=
\Prob\left[
\bigcap_b \left\{\bQb{b} = \bVb{b}\right\}
\right]
\Ex\left[
\Ind\left[
\sum_{i, b\in[B+K-1]} r_{b,i} D(\Vb{b,i}||P_i) \ge \frac{B (\Cpart_{B} - \eps)}{H(\bP)}
\right]
\right]\\
&\le
e^{-\frac{T'}{B+K-1} 
\sum_{b,i} r_{b,i} D(\Vb{b,i}||P_i)
}
\Ex\left[
\Ind\left[
\sum_{i, b\in[B+K-1]} r_{b,i} D(\Vb{b,i}||P_i) \ge \frac{B (\Cpart_{B} - \eps)}{H(\bP)}
\right]
\right]\\
&\text{\ \ \ \ (by Eq.~\eqref{ineq_klsum})}\\
&=
\Ex\left[
e^{-\frac{T'}{B+K-1} 
\sum_{b,i} r_{b,i} D(\Vb{b,i}||P_i)
}
\Ind\left[
\sum_{i, b\in[B+K-1]} r_{b,i} D(\Vb{b,i}||P_i) \ge \frac{B (\Cpart_{B} - \eps)}{H(\bP)}
\right]
\right]\\
&\le
\Ex\left[
e^{-\frac{T'}{B+K-1} 
\frac{B (\Cpart_{B}-\eps)}{H(\bP)}
}
\right]\\
&=
e^{-\frac{T'}{B+K-1} 
\frac{B (\Cpart_{B}-\eps)}{H(\bP)}
}.
\label{ineq_lower_opt}
\end{align}

Therefore, we have
\begin{align}
\lefteqn{
\Prob[J(T) \notin \Ist(\bP)]
}\\
&\le
\sum_{ \bVb{1},\dots,\bVb{B} \in \EQtype^K } 
e^{-\frac{B}{B+K-1} \frac{(\Cpart_{B}-\eps) T'}{H(\bP)}
}\\
&\text{\ \ \ \ (by Eq.~\eqref{ineq_lower_opt})}\\
&\le
(T/B + 2)^{2KB}
e^{
-\frac{B}{B+K-1} \frac{(\Cpart_{B}-\eps) T'}{H(\bP)}
}.
\end{align}
Here, $\log( (T/B + 2)^{2KB} ) = o(T)$ to $T$ when we consider $K,B$ as constants.

\end{proof}

\begin{proof}[Proof of \thmref{thm_upper}, Normal rewards]
For the ease of discussion, we assume unit variance $\sigma = 1$. Extending it to the case of common known variance $\sigma$ is straightforward.
Let 
\begin{equation}
\EB = \bigcup_{i,b} \left\{|\Qb{b,i}| \ge T\right\}.
\end{equation}
Then, it is easy to see
\begin{equation}\label{ineq_eb_subexp}
\Prob[ \EB ] = T^{2KB} O(e^{-T^2/2}),
\end{equation}
which is negligible because $\log(1/\Prob[ \EB ])/T$ diverges.

The PoE is bounded as
\begin{align}\label{ineq_poe_normal}
\Prob[J(T) \notin \Ist(\bP)]
&= 
\Prob\left[
J(T) \notin \Ist(\bP), \EB^c
\right] 
+ 
\Prob[\EB]
\end{align}

We have,
\begin{align}
\lefteqn{
\Prob\left[
J(T) \notin \Ist(\bP), \EB^c
\right]}\\
&= \int_{-T}^{T}\dots \int_{-T}^{T}  
\Ind[ J(T) \notin \Ist(\bP) ]
p(\bQb{B}|\bQb{B-1}\dots\bQb{1}) \diff\bQb{B}\dots 
p(\bQb{B}|\bQb{B-1}\dots\bQb{1}) \diff\bQb{b}\dots
p(\bQb{1}) \diff\bQb{1}.
\label{ineq_intnormal}
\end{align}
Here, 
\begin{align}
p(\bQb{b}|\bQb{b-1}\dots\bQb{1})
&= 
\prod_{i\in[K]}
\frac{n_{b,i}}{\sqrt{2 \pi}}  
\exp\left(
-\frac{n_{b,i}(\Qb{b,i}-P_i)^2}{2} 
\right)\\
&= 
\prod_{i\in[K]}
\frac{n_{b,i}}{\sqrt{2 \pi}}  
\exp\left(
-n_{b,i} D(\Qb{b,i}||P_i) 
\right)\\
&\le
\prod_{i\in[K]}
T  
\exp\left(
-n_{b,i} D(\Qb{b,i}||P_i) 
\right).
\end{align}

Finally, we have
\begin{align}
\text{\eqref{ineq_intnormal}}
&\le
T^{BK}
\int_{-T}^{T}\dots \int_{-T}^{T}  
\Ind[ J(T) \notin \Ist(\bP) ]
\prod_{i\in[K]}
\prod_{b\in[B+K-1]}
\exp\left(
-n_{b,i}D(\Qb{b,i}||P_i)
\right)
\diff\bQb{B}\dots 
\diff\bQb{1}\\
&\le
T^{BK}
\int_{-T}^{T}\dots \int_{-T}^{T}  
\Ind[ J(T) \notin \Ist(\bP) ]
\prod_{i\in[K]}
\prod_{b\in[B+K-1]}
\exp\left(
-\frac{T'\rb{b,i}}{B+K-1}D(\Qb{b,i}||P_i)
\right)
\diff\bQb{B}\dots 
\diff\bQb{1}\\
&\le
T^{BK}
\int_{-T}^{T}\dots \int_{-T}^{T}  
\Ind[ J(T) \notin \Ist(\bP) ]
\exp\left(
-\frac{B}{B+K-1} \frac{(\Cpart_{B} - \eps) T'}{H(\bP)}
\right)
\diff\bQb{B}\dots 
\diff\bQb{1}\text{\ \ \ \ (by \lemref{lem_weight_nonrandom})}\\
&\le
T^{BK}
\int_{-T}^{T}\dots \int_{-T}^{T}  
\exp\left(
-\frac{B}{B+K-1} \frac{(\Cpart_{B} - \eps) T'}{H(\bP)}
\right)
\diff\bQb{B}\dots 
\diff\bQb{1}\\
&\le 
T^{BK}
(2T)^{BK}
\exp\left(
-\frac{B}{B+K-1} \frac{(\Cpart_{B} - \eps) T'}{H(\bP)}
\right).
\end{align}
\end{proof}

\subsection{Proof of \thmref{thm_optperform}}

\begin{proof}[Proof of \thmref{thm_optperform}]

We first show that the limit 
\begin{equation}
\Cpartinf = \lim_{B\rightarrow \infty} \Cpart_B 
\end{equation}
exists. Namely, for any $\eta>0$ there exists $B_0 \in \Natural$ such that for any $B_1 > B_0$ we have
\begin{equation}
|\Cpart_{B_0} - \Cpart_{B_1}| \le \eta.
\end{equation}

\thmref{thm_upper} implies that \algoref{alg_lowerfollow} with $B=B_0$ and $\eps = \eta/2$ satisfies\footnote{Strictly speaking, \algoref{alg_lowerfollow} depends on $T$, and we take sequence of the algorithm $(\pi_{\mathrm{DOT},T})_{T=1,2,\dots}$.}
\begin{equation}
\liminf_{T \rightarrow \infty} \frac{
\log(
1/\PoE
)
}{T}
\ge 
\frac{B_0}{B_0+K-1} 
\frac{\Cpart_{B_0}-\eta/2}{H(\bP)},
\end{equation}
and thus
\begin{equation}\label{ineq_liminfcpart}
\inf H(\bP) \liminf_{T \rightarrow \infty} \frac{
\log(
1/\PoE
)
}{T}
\ge 
\frac{B_0}{B_0+K-1} 
\left(\Cpart_{B_0} - \frac{\eta}{2}\right).
\end{equation}
Moreover, \thmref{thm_lower} implies that any algorithm satisfies
\begin{equation}\label{ineq_limsupcpart}
\inf H(\bP) \limsup_{T \rightarrow \infty} \frac{
\log(
1/\PoE
)
}{T}
\le
\Cpart_{B_1}.
\end{equation}
Combining Eq.~\eqref{ineq_liminfcpart} and Eq.~\eqref{ineq_limsupcpart}, we have
\begin{equation}
\frac{B_0}{B_0+K-1} \left(\Cpart_{B_0} -\eta/2\right) 
\le \Cpart_{B_1} 
\end{equation}
and thus
\begin{align}
\Cpart_{B_0} 
&\le \Cpart_{B_1} + \frac{\eta}{2} + \frac{K-1}{B_0+K-1} \Cpart_{B_0}\\
&\le \Cpart_{B_1} + \frac{\eta}{2} + \frac{K-1}{B_0+K-1} \Cfull
\text{\ \ \ (by \corref{cor_oneb})}\\
&\le \Cpart_{B_1} + \frac{\eta}{2} + \frac{\eta}{2} 
\text{\ \ \ (by $K \ge 2$, by taking $B_0 \ge 2 K \Cfull / \eta$)}\\
&\le \Cpart_{B_1} + \eta.
\end{align}
By swapping $B_0, B_1$, it is easy to show that 
\[
\Cpart_{B_1} \le \Cpart_{B_0} + \eta,
\]
and thus 
\[
|\Cpart_{B_0} - \Cpart_{B_1}| \le \eta,
\]
which implies that the limit exists. 
It is easy to confirm that the performance of \algoref{alg_lowerfollow} with any $B \ge 2 K \Cfull / \eta$ and $\eps = \eta/2$ satisfies Eq.~\eqref{ineq_etaperform}.
\end{proof}

\end{document}